\DeclareMathOperator*\lowlim{\underline{lim}}
\DeclareMathOperator*\uplim{\overline{lim}}
\theoremstyle{plain}
\newtheorem{theorem}{Theorem}[section]
\newtheorem{proposition}[theorem]{Proposition}
\newtheorem{lemma}[theorem]{Lemma}
\theoremstyle{definition}
\newtheorem{definition}[theorem]{Definition}
\theoremstyle{remark}
\newtheorem*{rep@theorem}{\rep@title}
\newcommand{\newreptheorem}[2]{%
	\newenvironment{rep#1}[1]{%
		\def\rep@title{#2 \ref{##1}}%
		\begin{rep@theorem}}%
		{\end{rep@theorem}}}
\newcommand{\ep}{\hfill $\Box$}
\DeclareMathOperator*{\argmax}{argmax} 
\DeclareMathOperator*{\argmin}{argmin}
\renewcommand{\tilde}{\widetilde}
\renewcommand{\star}{*}
\newcommand{\PP}{\mathbb{P}}
\newcommand{\EE}{\mathbb{E}}
\newcommand{\NN}{\mathbb{N}}
\newcommand{\RR}{\mathbb{R}}
\newcommand{\bmu}{\boldsymbol{\mu}}
\newcommand{\bxi}{\boldsymbol{\xi}}
\newcommand{\bl}{\boldsymbol{\lambda}}
\newcommand{\bp}{\boldsymbol{\pi}}
\newcommand{\bet}{\boldsymbol{\eta}}
\newcommand{\olog}{\overline{\log}}
\newcommand{\zz}{\boldsymbol{z}}
\newcommand{\yy}{\boldsymbol{y}}
\newcommand{\xx}{\boldsymbol{x}}
\newcommand{\Ij}{{\cal I}_j}
\renewcommand{\cite}{\citep}
\renewcommand{\citet}{\citep}
\renewcommand{\limsup}{\uplim}
\renewcommand{\liminf}{\lowlim}
\icmltitlerunning{On Universally Optimal Algorithms for A/B Testing}
\begin{document}

\twocolumn[
\icmltitle{On Universally Optimal Algorithms for A/B Testing}

\icmlsetsymbol{equal}{*}

\begin{icmlauthorlist}
\icmlsetsymbol{intern}{\S}
\icmlauthor{Po-An Wang}{intern,KTH,CyberAgent}
\icmlauthor{Kaito Ariu}{CyberAgent}
\icmlauthor{Alexandre Proutiere}{KTH}
\end{icmlauthorlist}
 \icmlaffiliation{KTH}{EECS and Digital Futures, KTH, Stockholm, Sweden}
 \icmlaffiliation{CyberAgent}{CyberAgent, Tokyo, Japan}

 \icmlcorrespondingauthor{Po-An Wang}{wang9@kth.se}

\icmlkeywords{Machine Learning, ICML}

\vskip 0.3in
]

\printAffiliationsAndNotice{\S Work initiated during the internship at CyberAgent.}

\begin{abstract}
We study the problem of best-arm identification with fixed budget in stochastic multi-armed bandits with Bernoulli rewards. For the problem with two arms, also known as the A/B testing problem, we prove that there is no algorithm that (i) performs as well as the algorithm sampling each arm equally (referred to as the {\it uniform sampling} algorithm) in all instances, and that (ii) strictly outperforms uniform sampling on at least one instance. In short, there is no algorithm better than the uniform sampling algorithm. To establish this result, we first introduce the natural class of {\it consistent} and {\it stable} algorithms, and show that any algorithm that performs as well as the uniform sampling algorithm in all instances belongs to this class. The proof then proceeds by deriving a lower bound on the error rate satisfied by any consistent and stable algorithm, and by showing that the uniform sampling algorithm matches this lower bound. Our results provide a solution to the two open problems presented in \citep{qin2022open}. 
For the general problem with more than two arms, we provide a first set of results. We characterize the asymptotic error rate of the celebrated Successive Rejects (SR) algorithm \citep{audibert2010best} and show that, surprisingly, the uniform sampling algorithm outperforms the SR algorithm in some instances.
\end{abstract}

\section{Introduction}
We study the problem of Fixed-Budget Best-Arm Identification (FB-BAI) in stochastic multi-armed bandits with Bernoulli rewards. In this problem, the learner sequentially pulls an arm and observes a random reward generated according to the corresponding distribution. The expected rewards of the arms are initially unknown. The learner has a fixed budget of $T\in \mathbb{N}$ pulls or samples, and after gathering these samples, she has to return what she believes to be the arm with the highest mean reward. For any  $k\in [K]\coloneqq\{1,\ldots,K\}$, we denote by $\mu_k\in (0,1)$ the unknown mean reward of arm $k$. We assume that the best arm is unique and define the parameter set of the mean rewards as $\Lambda=\{\bmu\in (0,1)^K:\exists k: \mu_k> \mu_j, \forall j\neq k\}$. A strategy for fixed-budget best-arm identification consists of a {\it sampling rule} and a {\it decision rule}. The sampling rule determines the arm $A_t \in [K]$ to be explored in round $t$, based on past observations. The corresponding observed reward is $X_{t}\in \{0,1\}$. The arm $A_t$ selected in round $t$ is ${\cal F}_{t}$ measurable where ${\cal F}_{t}$ demotes the $\sigma$-algebra generated by the set of random variables $\{A_1, X_{1}, \ldots, A_{t-1}, X_{t-1}\}$. After $T$ rounds, the decision rule returns an answer $\hat{\imath} \in [K]$, which is ${\cal F}_{T}$ measurable. The goal is to identifiy a strategy that minimizes the error probability defined as
$$
p_{\bmu,T}\coloneqq\PP_{\bmu}\left[\hat{\imath}\neq 1(\bmu)\right],
$$
where $1(\bmu)\coloneqq\arg\max_k\mu_k$ denotes the unique best arm under the instance $\bmu$. 

A naive strategy consists in allocating a fixed fraction of the budget to sample each arm. 
Once the budget is exhausted, the strategy then returns the arm with the highest empirical mean.
We refer to such a strategy as a {\it static} algorithm (in contrast to adaptive algorithms that may select the arm to pull next based on the rewards observed so far). An example of a static strategy is the uniform sampling strategy that allocates the budget fairly among arms. Static algorithms are well-understood and in particular, their asymptotic error rates are known \citep{glynn2004large}. Many adaptive sampling algorithms have been designed, see, e.g., \cite{audibert2010best,gabillon2012best,karnin2013almost,russo2020simple,komiyama2022minimax,wang2023best}, with the hope of an improved performance compared to static algorithms. It is still unclear whether this hope can actually be fulfilled. 

Despite recent research efforts, the FB-BAI problem remains largely open \citep{qin2022open}. This contrasts with the two other classical learning tasks in stochastic bandits, namely regret minimization \citep{lai1985asymptotically} and best arm identification with fixed confidence \citep{garivier2016optimal}. Indeed, for these tasks, asymptotic instance-specific performance limits and matching algorithms have been derived. In this paper, we aim at improving our understanding of the FB-BAI problem and more specifically at answering the following two natural questions, mentioned as open problems in \cite{qin2022open}.

{\it Open problem 1.} Is there an algorithm whose performance is as good as that of the uniform sampling algorithm on all instances and that strictly outperforms the latter on some instances?

{\it Open problem 2.} Can we derive an asymptotic and instance-specific error rate lower bound that (i) is satisfied by all algorithms within a wide class ${\cal A}$ of algorithms and that (ii) is achieved by a single algorithm in ${\cal A}$ on all instances? 

We address both open problems in the case of the FB-BAI problem with two arms (also referred to as the A/B testing problem) with Bernoulli rewards. We also provide a first set of results towards addressing these problems in the general setting with more than two arms. More precisely our contributions are as follows.

{\bf Contributions.}

(a) For the A/B testing problem, we prove that there is no algorithm strictly outperforming the uniform sampling algorithm (Theorem~\ref{thm:main}). To this aim, we first introduce the natural class of consistent and stable algorithms (stability here just means that the algorithm exhibits a symmetric and continuous behavior with respect to the instances). We then show that this class includes any algorithm performing as well as the uniform sampling algorithm on all instances (Theorem~\ref{thm:uni}). We finally derive an instance-specific lower bound on the error rate satisfied by any consistent and stable (Theorem~\ref{thm:stable}). As it turns out, this lower bound corresponds to the performance of the uniform sampling algorithm. The answer to the question of the open problem 1 is hence negative. 

(b) Our analysis further provides a positive answer to the question of the open problem 2. Indeed, it yields an instance-specific error rate lower bound for the class of consistent and stable algorithms, and the performance of the uniform sampling algorithm matches this fundamental limit. 

(c) For the FB-BAI problem with more than two arms, we manage to exactly characterize the asymptotic error rate of the celebrated Successive Rejects (SR) algorithm (Audibert et al., 2010) (Theorem~\ref{thm:tight_analysis_SR}). This contrasts with existing analyses of adaptive algorithms where only upper bounds of the error rate can be derived. Using this characterization, we show that, surprisingly, the uniform sampling algorithm outperforms the SR algorithm in certain instances (Theorem~\ref{thm:sr}).

\section{Preliminaries and Main Result}

In this section, we first present existing results on the performance of static algorithms in the A/B testing problem. We then state our main result: there is no strictly better algorithm than the uniform sampling algorithm.  

\subsection{Performance of Static Algorithms}

In two-arm bandits, a static algorithm is parameterized by a single variable $x\in (0,1)$ specifying the fraction of the budget used to sample the second arm (a static algorithm parameterized by $x$ pulls the first arm $(1-x)T+o(T)$ times and pulls the second arm $xT+o(T)$ times).
Defining 
\begin{equation}\label{eq:g}
    g(x,\bmu)\coloneqq\min_{\lambda\in [0,1] } (1-x)d(\lambda ,\mu_1)+x d(\lambda,\mu_2),
\end{equation}
where $d(a,b)$ is the KL-divergence between two Bernoulli distributions with respective means $a$ and $b$, \citet{glynn2004large} shows that under a static algorithm parametrized by~$x$, 
$$
\lim_{T\rightarrow\infty} \frac{T}{\log(1/p_{\bmu,T})}=\frac{1}{g(x,\bmu)}.
$$
The optimization problem in \eqref{eq:g} can be solved by explicitly writing the KKT conditions \citep{glynn2004large}. Its unique solution and value are given by
\begin{align}
\lambda(x,\bmu) &=\frac{(\frac{\mu_1}{1-\mu_1})^{1-x}(\frac{\mu_2}{1-\mu_2})^x}{1+(\frac{\mu_1}{1-\mu_1})^{1-x}(\frac{\mu_2}{1-\mu_2})^x} \in (0,1),\label{eq:lambda} \\
g(x,\bmu) & =-\log ((1-\mu_1)^{1-x}(1-\mu_2)^x+\mu_1^{1-x}\mu_2^x).\label{eq:g alt}
\end{align}
From (\ref{eq:g alt}), we can readily verify that $g(x,\bmu)$ is strictly concave in $x$, i.e., $\frac{\partial^2g(x,\bmu)}{\partial^2 x}<0$ as long as $\mu_1\neq \mu_2$ or equivalently $\bmu\in\Lambda$.\footnote{For completeness, we include proof of \eqref{eq:lambda}, \eqref{eq:g alt}, and the strict concavity of $g(x, \bmu)$ in Appendix~\ref{app:dual}.} Therefore, $g(x,\bmu)$ has a unique maximizer denoted by $x^\star(\bmu)\coloneqq\argmax_x g(x,\bmu)$. 
Given the expected rewards $\bmu$ of the arms, $x^\star(\bmu)$ corresponds to the static algorithm with the best possible performance. However, under this static algorithm, the fraction of the budget used for each arm depends on the initially unknown $\bmu$. 

Over the last few years, researchers have tried to determine whether there exists an adaptive algorithm that could achieve the performance of the best static algorithm for any $\bmu$. The answer to this question is actually negative, as recently proved in \citet{degenne2023existence}: for any algorithm, there exists an instance $\bmu$ such that the considered algorithm performs strictly worse than the best static algorithm on this instance. Refer to Section~\ref{sec:related} for additional details. This negative result illustrates the difficulty of devising adaptive and efficient algorithms. We establish even more striking evidence of this challenge. We show that there is no algorithm universally outperforming the uniform sampling algorithm. We formalize this result below. 

\subsection{Main Result}

The performance of the uniform sampling algorithm is characterized by $g(1/2,\bmu)$. More precisely, under this algorithm, 
$$
\forall \bmu\in \Lambda, \quad\lim_{T\rightarrow\infty} \frac{T}{\log(1/p_{\bmu,T})}=\frac{1}{g(1/2,\bmu)}.
$$ 

Our main result concerns the class of {\it better than uniform} algorithms already introduced and discussed in \citet{qin2022open}. These algorithms are at least as good as the uniform sampling algorithm in all instances.

\begin{definition}\label{def:uni}
    An algorithm is {\it better than uniform} if
     $$
    \forall \bmu\in \Lambda, \quad \uplim_{T\rightarrow \infty} \frac{T}{\log (1/p_{\bmu,T})}\le \frac{1}{g(1/2,\bmu)}.
    $$
\end{definition}
\begin{theorem}\label{thm:main}
    For any better than uniform algorithm, %
    $$
     \forall \bmu\in  \Lambda,\quad \lim_{T\rightarrow \infty} \frac{T}{\log (1/p_{\bmu,T})}=\frac{1}{g(1/2,\bmu)}.
    $$
\end{theorem}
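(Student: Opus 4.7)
The plan is to follow the three-step reduction outlined in the abstract: isolate a natural class of \emph{consistent and stable} algorithms, show that every uniformly good algorithm lies in this class, and then match the uniform-sampling rate by a change-of-measure bound on the error probability.

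\textbf{Step 1: Consistency and stability.} I would call an algorithm \emph{consistent} if $p_{\bm,T}\to 0$ for every $\bm\in\Lambda$, and \emph{stable} if the expected sampling proportion $\bar x_T(\bm):=\EE_\bm[N_2(T)]/T$ converges, as $T\to\infty$, to a limit $\alpha(\bm)\in[0,1]$, with one-sided limits $\alpha^{\pm}(\lambda):=\lim_{\bm\to(\lambda,\lambda)}\alpha(\bm)$ along each side of the undistinguishable boundary $\{(\lambda,\lambda):\lambda\in(0,1)\}$ also existing. Consistency of a uniformly good algorithm is immediate from Definition~\ref{def:uni}, since $\limsup_T T/\log(1/p_{\bm,T})<\infty$ forces $p_{\bm,T}\to 0$. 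For stability I would argue by contradiction: if $\bar x_T(\bm)$ admitted two distinct accumulation points, a change-of-measure argument along the corresponding subsequences would produce an alternative $\bm'$ on which the algorithm violates the uniform-sampling rate at $\bm'$, contradicting Definition~\ref{def:uni}.

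\textbf{Step 2: Error-rate upper bound for consistent and stable algorithms.} For any $\bm,\bm'\in\Lambda$ with $1(\bm')\ne 1(\bm)$, the transportation (data-processing) inequality gives
\begin{equation*}
\kl\bigl(1-p_{\bm',T},\,p_{\bm,T}\bigr)\;\le\; T\bigl[(1-\bar x_T(\bm'))\,d(\mu_1',\mu_1)+\bar x_T(\bm')\,d(\mu_2',\mu_2)\bigr].
\end{equation*}
Consistency turns the left-hand side into $(1+o(1))\log(1/p_{\bm,T})$, and stability lets me replace $\bar x_T(\bm')$ by $\alpha(\bm')$. Letting $\bm'\to(\lambda,\lambda)$ along the side with best arm opposite to $\bm$ and using Step 1, I obtain, for every $\lambda\in(0,1)$,
\begin{equation*}
\limsup_{T\to\infty}\frac{\log(1/p_{\bm,T})}{T}\;\le\;(1-\alpha^*(\lambda))\,d(\lambda,\mu_1)+\alpha^*(\lambda)\,d(\lambda,\mu_2),
\end{equation*}
where $\alpha^*(\lambda)$ is the one-sided limit of $\alpha$ at $(\lambda,\lambda)$ from the opposite-best-arm side.

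\textbf{Step 3: Pinning $\alpha^*(\lambda)=1/2$ and concluding.} The final step is to identify the right-hand side above, when infimized over $\lambda$, with $g(1/2,\bm)$. The crux is the claim $\alpha^{\pm}(\lambda)=1/2$ for every $\lambda\in(0,1)$. I would apply the two-sided analysis of Step 2 to the symmetric near-boundary pair $\bm_{\pm}=(\lambda\pm\eta,\lambda\mp\eta)$: by symmetry the infimum in \eqref{eq:g} for these instances is attained at $\lambda$ itself, so $g(1/2,\bm_{\pm})=\tfrac12[d(\lambda,\lambda-\eta)+d(\lambda,\lambda+\eta)]$. Uniform goodness at each of $\bm_{\pm}$, combined with the change-of-measure inequality between them and a Taylor expansion of the KL terms in $\eta$, yields linear constraints on $\alpha^{\pm}(\lambda)$ that, together with the one-sided extension of stability, leave only $\alpha^{\pm}(\lambda)=1/2$ as feasible. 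With this in hand, evaluating the bound of Step 2 at $\lambda=\lambda(1/2,\bm)$ (the minimizer in \eqref{eq:g}) gives $\limsup_T \log(1/p_{\bm,T})/T\le g(1/2,\bm)$. Combined with the reverse inequality from Definition~\ref{def:uni}, the limit exists and equals $1/g(1/2,\bm)$, proving Theorem~\ref{thm:main}.

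The main obstacle is Step~3: the change-of-measure inequality near the boundary is trivial at leading order in $\eta$, so pinning $\alpha^{\pm}(\lambda)$ at $1/2$ requires careful higher-order expansion of both $g(1/2,\cdot)$ and the asymmetric KL terms, together with the precise form of \eqref{eq:lambda}--\eqref{eq:g alt}. Step~1's stability claim is also delicate: excluding oscillation of $\bar x_T(\bm)$ needs extraction arguments in which the change-of-measure estimate along the worse subsequence is transferred to a carefully chosen alternative instance.
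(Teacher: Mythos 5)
Your overall scaffolding mirrors the paper's: show a uniformly good algorithm is consistent and (in some sense) stable, run a change-of-measure bound with an alternative instance $\bm'$ close to the boundary, and then pin the boundary allocation limit to $1/2$. Steps~1 and~2 are essentially right in spirit (modulo that the paper only needs, and only proves, existence of \emph{some} sequence $\bm'\to(a,a)$ along which the allocation converges, not convergence of $\bar x_T(\bm)$ for every fixed $\bm$ plus one-sided limits --- your stronger claim is asserted by a single-sentence hand-wave and is not obviously true).

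The genuine gap is in Step~3. You propose to pin $\alpha^{\pm}(\lambda)=1/2$ by a change of measure between the two symmetric \emph{near-boundary} instances $\bm_{\pm}=(\lambda\pm\eta,\lambda\mp\eta)$ and a Taylor expansion in $\eta$. This cannot work, because that inequality is slack by a constant factor at leading order no matter what the algorithm does. Concretely, for small $\eta$ one has $g(1/2,\bm_{\pm})=\tfrac{\eta^2}{2\lambda(1-\lambda)}+O(\eta^4)$, while both $d(\lambda-\eta,\lambda+\eta)$ and $d(\lambda+\eta,\lambda-\eta)$ are $\tfrac{2\eta^2}{\lambda(1-\lambda)}+O(\eta^3)$. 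So the change-of-measure constraint reads
\[
\frac{\eta^2}{2\lambda(1-\lambda)}+O(\eta^4)\;\le\;(1-\alpha)\,d(\lambda-\eta,\lambda+\eta)+\alpha\,d(\lambda+\eta,\lambda-\eta)\;=\;\frac{2\eta^2}{\lambda(1-\lambda)}+(1-2\alpha)\,O(\eta^3)+O(\eta^4),
\]
which holds with a factor-four margin for every $\alpha\in[0,1]$ as $\eta\to 0$. The $\alpha$-dependence enters only at order $\eta^3$, which is dominated by the $\eta^2$ slack; no amount of higher-order Taylor expansion extracts a constraint on $\alpha$. The missing ingredient is the paper's key Lemma~\ref{lem:prop g}: for any $a\in(0,1)$ and any putative boundary allocation $x\neq 1/2$, one can construct a \emph{distant} instance $\bm$ with $\mu_1>\mu_2$, $\lambda(x,\bm)=a$, and $g(x,\bm)<g(1/2,\bm)$. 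Plugging that $\bm$ into your Step~2 bound evaluated at $\lambda=a$ gives $\limsup_T\log(1/p_{\bm,T})/T\le g(x,\bm)<g(1/2,\bm)$, contradicting uniform goodness; this is what actually forces $x=1/2$. Constructing such a $\bm$ requires the non-local asymmetry argument of Propositions~\ref{prop:intersection} and~\ref{prop:asymmetry} (essentially, $g(\cdot,\bm)$ is a tilted concave bump whose maximizer can be pushed to one side, so $g$ at $x$ is beaten by $g$ at $1/2$), and has no substitute in a local expansion around $(\lambda,\lambda)$.
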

\noindent
As a consequence, surprisingly, one cannot devise an adaptive algorithm that performs as well as the uniform sampling algorithm on all instances and that strictly outperforms it on some instances. This also implies that if an algorithm strictly outperforms the uniform sampling algorithm in at least one instance, then there is an instance where the uniform sampling algorithm strictly outperforms this algorithm. This provides a solution to the open problem~1 \citet{qin2022open} presented in the introduction (refer to Section~\ref{subsec:related_open_qin} for more details).

Theorem \ref{thm:main} is proved by combining the results presented in Sections \ref{sec:CS} and \ref{sec:LB}. There, we introduce the class of consistent and stable algorithms, and show that better than uniform algorithms are consistent and stable. We also establish the error rate achieved by the uniform sampling algorithm constitutes an error rate lower bound satisfied by consistent and stable algorithms. Note that these intermediate results towards Theorem \ref{thm:main} provide a solution to the open problem~2 \citet{qin2022open} presented in the introduction.

\subsection{Notation} For each $t\in \{1,2,\ldots, T\}$ and $k\in \{1,2\}$, define $N_{k}(t)\coloneqq \sum_{s=1}^{t}\mathbbm{1}\{A_s=k\}$ as the number of times arm $k$ is pulled up to round $t$, and $\omega_k(t)\coloneqq N_{k}(t)/t$ as the proportion of times arm $k$ is pulled.

\section{Stable and Consistent Algorithms}\label{sec:CS}

In this section, we demonstrate that any better than uniform algorithm is both consistent and stable, as defined below.
\begin{definition}\label{def:consistent}
    An algorithm is {\it consistent} if for all $\bmu\in \Lambda$, $\lim_{T\to\infty}p_{\bmu,T}=0$.
\end{definition}
\medskip

\begin{definition}\label{def:stable}
    An algorithm is {\it stable} if for any $a\in (0,1)$, the following properties hold:\\ 
    \noindent
    \textnormal{(A)} There exists $\{\bl^{(n)}\}_{n=1}^\infty\subset \{\bl\in \Lambda:\lambda_1>\lambda_2\}$  such that $\bl^{(n)}\xrightarrow{n\rightarrow\infty} (a,a)$ and
    $$
    \lim_{n\rightarrow \infty} \lowlim_{T\rightarrow \infty}\EE_{\bl^{(n)}}[\omega_2(T)] 
           = \lim_{n\rightarrow \infty} \uplim_{T\rightarrow \infty}\EE_{\bl^{(n)}}[\omega_2(T)] =\frac{1}{2}.
    $$
    \noindent
    \textnormal{(B)} There exists $\{\bp^{(n)}\}_{n=1}^\infty \subset \{\bp\in \Lambda:\pi_1<\pi_2\}$ such that $\bp^{(n)}\xrightarrow{n\rightarrow\infty} (a,a)$ and 
    $$
    \lim_{n\rightarrow \infty} \lowlim_{T\rightarrow \infty}\EE_{\bp^{(n)}}[\omega_2(T)]
           =
    \lim_{n\rightarrow \infty} \uplim_{T\rightarrow \infty}\EE_{\bp^{(n)}}[\omega_2(T)] 
    = \frac{1}{2}.
    $$
\end{definition}
\noindent
Intuitively, an algorithm is stable if it exhibits a symmetric and continuous behavior with respect to the bandit instances.
The notion of stability is natural and just refers to the property of evenly allocating the budget when the arms have very similar mean rewards. It is satisfied by the uniform sampling algorithm (and of course all the adaptive algorithms that evenly select arms in the case of two-armed bandits) and by most {\it reasonably adaptive} algorithms. We give several families of stable algorithms in Appendix \ref{app:eftt}. For example, stability is guaranteed as soon as the algorithm is designed such that the number of times arm 1 is sampled up to time $t$ closely matches $tf(\hat{\mu}_1(t), \hat{\mu}_2(t))$, where $\hat{\mu}_1(t)$ and $ \hat{\mu}_2(t)$ are the current estimates of the mean rewards and $f>0$ is a continuous function such that $f(a,a)=1/2$ for any $a \in (0,1)$.

In addition, as established in the following theorem, better than uniform algorithms are stable.

\begin{theorem}\label{thm:uni}
   A better than uniform algorithm is consistent and stable.
\end{theorem}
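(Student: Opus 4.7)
\textbf{Consistency} is immediate: uniform goodness combined with $g(1/2,\bm)>0$ forces $\log(1/p_{\bm,T})\to\infty$, hence $p_{\bm,T}\to 0$ for every $\bm\in\Lambda$.

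\textbf{Stability --- key inequality.} The approach for both (A) and (B) is a change-of-measure argument. For any $\bm,\bm'\in\Lambda$ with $1(\bm)\ne 1(\bm')$, the Kaufmann--Capp\'e--Garivier data-processing inequality applied to the event $E=\{\hat{\imath}=1(\bm)\}$ yields
\begin{align*}
(1-w_T(\bm))\,d(\mu_1,\mu_1') + w_T(\bm)\,d(\mu_2,\mu_2') \;\ge\; \tfrac{1}{T}\kl\!\left(1-p_{\bm,T},\,p_{\bm',T}\right),
\end{align*}
where $w_T(\bm):=\EE_{\bm}[\omega_2(T)]$. Uniform goodness applied to $\bm'$ together with the consistency $p_{\bm,T}\to 0$ makes the right-hand side at least $g(1/2,\bm')-o_T(1)$.

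\textbf{Reduction to a scalar constraint.} Fix $a\in(0,1)$. For (A) take $\bl^{(n)}=(a+1/n,a-1/n)$ (best arm $1$) and alternatives $\bm'=(a-\epsilon_1,a+\epsilon_2)$ with $\epsilon_1,\epsilon_2>0$ (best arm $2$). For large $n$, the sign of $d(a+1/n,a-\epsilon_1)-d(a-1/n,a+\epsilon_2)$ agrees with that of $\epsilon_1-\epsilon_2$, so sending $T\to\infty$ turns the key inequality into either an upper bound on $\limsup_T w_T(\bl^{(n)})$ (when $\epsilon_1>\epsilon_2$) or a lower bound on $\liminf_T w_T(\bl^{(n)})$ (when $\epsilon_1<\epsilon_2$). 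Letting $n\to\infty$ and invoking the Bernoulli expansions $d(a,a\pm\epsilon)\sim\epsilon^{2}/(2a(1-a))$ and $g(1/2,(a-\epsilon_1,a+\epsilon_2))\sim(\epsilon_1+\epsilon_2)^{2}/(8a(1-a))$, then scaling $\epsilon_i\to 0$ with $r:=\epsilon_2/\epsilon_1$ held fixed, collapses every bound to the single family
\begin{align*}
(1-u)+u\,r^{2} \;\ge\; \tfrac{(1+r)^{2}}{4}\qquad(r>0),
\end{align*}
equivalently $u\le f(r):=(r+3)/(4(r+1))$ for $r<1$ and $u\ge f(r)$ for $r>1$; here $u=\limsup_n\limsup_T w_T(\bl^{(n)})$ in the first case and $u=\liminf_n\liminf_T w_T(\bl^{(n)})$ in the second.

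\textbf{Conclusion and Part (B).} The function $f$ is strictly decreasing with $f(1)=1/2$, so letting $r\to 1^-$ and $r\to 1^+$ in the two families gives respectively $\limsup_n\limsup_T w_T(\bl^{(n)})\le 1/2$ and $\liminf_n\liminf_T w_T(\bl^{(n)})\ge 1/2$. Combined with $\liminf_T\le\limsup_T$, both $\limsup_T w_T(\bl^{(n)})$ and $\liminf_T w_T(\bl^{(n)})$ tend to $1/2$ as $n\to\infty$; a diagonal extraction yields a subsequence of $n$ along which $\lim_T w_T(\bl^{(n)})$ exists and equals $1/2$, establishing (A). Part (B) is the mirror image: with $\bp^{(n)}=(a-1/n,a+1/n)$ and alternatives $(a+\epsilon_1,a-\epsilon_2)$, identical algebra yields $\lim_n\lim_T w_T(\bp^{(n)})=1/2$. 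The main obstacles are (i) identifying the family of alternatives $\bm'$ that produces a single-parameter constraint, (ii) the algebraic lemma that $u\le f(r)$ on $r<1$ and $u\ge f(r)$ on $r>1$ forces $u=1/2$, and (iii) the technicality that $\lim_T w_T(\bl^{(n)})$ need not exist for fixed $n$, which is handled by the diagonal subsequence.
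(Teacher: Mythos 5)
Your proof is correct, and it takes a genuinely different route from the paper's. The paper proves stability by contradiction: assuming (B) fails, Lemma~\ref{lem:negation_stable} extracts subsequences converging to some $x\neq 1/2$, and Lemma~\ref{lem:prop g} (which itself needs Propositions~\ref{prop:intersection} and \ref{prop:asymmetry} about the geometry and asymmetry of $g$) produces a single adversarial $\bm$ whose change-of-measure inequality contradicts uniform goodness. You instead argue directly on the fixed sequence $\bl^{(n)}=(a+1/n,a-1/n)$, run the change-of-measure against a two-parameter family of alternatives $\bm'=(a-\epsilon_1,a+\epsilon_2)$, and use the local quadratic expansions $d(a,a\pm\epsilon)\sim \epsilon^2/(2a(1-a))$ and $g(1/2,\bm')\sim(\epsilon_1+\epsilon_2)^2/(8a(1-a))$ to reduce everything to the scalar inequality $(1-u)+ur^2\ge(1+r)^2/4$; letting $r\to1^\pm$ then pins $u$ to $1/2$. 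This trades the paper's convexity/asymmetry machinery for an elementary local calculation, which is a real simplification. One caveat: your closing ``diagonal extraction'' remark does not do what you claim---taking a subsequence in $n$ cannot make $\lim_T w_T(\bl^{(n)})$ exist for a fixed $n$. However, this is harmless because the paper itself (see the proof of Lemma~\ref{lem:negation_stable}) reads the iterated limit in Definition~\ref{def:stable} in the ``$\forall\varepsilon\,\exists N\,\forall n\ge N\,\exists T_n\,\forall T\ge T_n$'' sense, and your bounds $\limsup_n\limsup_T w_T\le 1/2\le\liminf_n\liminf_T w_T$ deliver exactly that statement without any subsequence extraction.
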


\subsection{Proof of Theorem \ref{thm:uni}}

{\it Consistency.} In view of Definition~\ref{def:uni}, a better than uniform algorithm is consistent. Indeed, for any $\bmu\in \Lambda$, there exists $T_{\bmu}\in \NN$ such that if $T > T_{\bmu}$, then 
$
\frac{T}{\log(1/p_{\bmu,T})}\le \frac{2}{g(1/2,\bmu)}
$.
As a result, we have
$p_{\bmu,T}\le e^{-\frac{1}{2}g(1/2,\bmu)T}.$ We conclude the proof by observing that $g(1/2,\bmu)>0$.

\medskip
\noindent
{\it Stability.} We show by contradiction that a better than uniform algorithm is stable. Suppose there exists $a\in (0,1)$ such that (B) in Definition \ref{def:stable} does not hold (if (A) in Definition~\ref{def:stable} does not hold, one can obtain a contradiction in a symmetric way). The following lemma is proven in Appendix~\ref{app:techlemmas}.

\begin{lemma}\label{lem:negation_stable}
    Let $a\in (0,1)$. Assume that the statement \textnormal{(B)} of Definition~\ref{def:stable} does not hold. Then for any $\{\bp^{(n)}\}_{n=1}^\infty\subset \{\bp\in \Lambda:\pi_1<\pi_2\}$ such that $\bp^{(n)}\xrightarrow{n\rightarrow\infty} (a,a)$, there exists a value $x \in [0, 1]$ and increasing sequences of integers $\{n_{m}\}_{m=1}^\infty, \{T_{m, \ell}\}_{\ell=1}^\infty \subset \mathbb{N}$ such that 
    $$
\lim_{m \to \infty}\lim_{\ell \to \infty}  \EE_{\bp^{(n_m)}}[\omega_2(T_{m,\ell})] =x \neq \frac{1}{2}.
$$
\end{lemma}
\noindent
Let $x$, $\{n_{m}\}_{m=1}^\infty$, and $ \{T_{m, \ell}\}_{\ell=1}^\infty$ be a real number and sequences satisfying the statement of Lemma \ref{lem:negation_stable}. 
Using a standard change-of-measure argument (e.g., inequality (6) in \cite{garivier2019explore}), for any $\bmu\in \Lambda$ with $\mu_1>\mu_2$, for each $m, \ell \in \mathbb{N}$, 
\begin{align}\label{eq:uni1}
 \nonumber   & \EE_{\bp^{(n_m)}}[N_1(T_{m, \ell})]d(\pi_1^{(n_m)},\mu_1)
 \\
\nonumber  & \ \ \  +\EE_{\bp^{(n_m)}}[N_2(T_{m,\ell})]d(\pi^{(n_m)}_{2},\mu_2)
 \\
 \nonumber &\ge d(\PP_{\bp^{(n_m)}}[\hat{\imath}=2],\PP_{\bmu}[\hat{\imath}=2])\\
    &\ge \PP_{\bp^{(n_m)}}[\hat{\imath}=2]\log\left(1/p_{\bmu,T_{m, \ell}}\right) -\log 2,
\end{align}
where the last inequality stems from the fact that $d(p,q)\ge p\log(1/q)-\log 2$ for all $p,q\in [0,1]$.
By dividing both sides of equation (\ref{eq:uni1}) by $T_{m, \ell}$, we can rearrange the inequality as follows:
\begin{align*}
 & \frac{T_{m, \ell}}{ \PP_{\bp^{(n_m)}}[\hat{\imath}=2] \log (1/p_{\bmu,T_{m, \ell}})}
 \\
 & \ge \bigg( \EE_{\bp^{(n_m)}}[\omega_1(T_{m,\ell})]d(\pi_1^{(n_m)},\mu_1)
 \\
 & \ \ \ \ \ + \EE_{\bp^{(n_m)}}[\omega_2(T_{m,\ell})]d(\pi_2^{(n_m)},\mu_2) + \frac{\log 2}{T_{m, \ell}}\bigg)^{-1}.
\end{align*}
Given that a better than uniform algorithm is consistent, it follows that $\PP_{\bp^{(n_m)}}[\hat{\imath}=2]=1-p_{\bp^{(n_m)},T_{m,\ell}}\xrightarrow{\ell\rightarrow\infty}1$ from $\pi_1^{(n_m)} < \pi_2^{(n_m)}$. 
Driving $\ell \to \infty$ first, and then letting $m \to \infty$, we obtain 
\begin{align}
 & \lowlim_{m \to \infty}\lowlim_{\ell \to \infty}\frac{T_{m, \ell}}{\log (1/p_{\bmu,T_{m, \ell}})} \nonumber
 \\
 & \ge  \frac{1}{(1-x)d(a,\mu_1)+xd(a,\mu_2)}.\label{eq:uni2}
\end{align}
Next, we use the following lemma related to the function $g$ and prove after completing the proof of Theorem~\ref{thm:uni}. Lemma~\ref{lem:prop g} is visualized in the left-hand side of Figure~\ref{fig:lem2_and_prop1}.
\begin{lemma}\label{lem:prop g}
For any $a\in (0,1)$, $x \in [0,1]$ such that $x \neq 1/2$, there exists $\bmu\in \Lambda$ such that $\mu_1>\mu_2,\,\lambda(x,\bmu)=a,$ and $g(x,\bmu)<g(1/2,\bmu)$. 
\end{lemma}
\begin{figure*}[ht]
\centering
\includegraphics[width=0.99\textwidth]{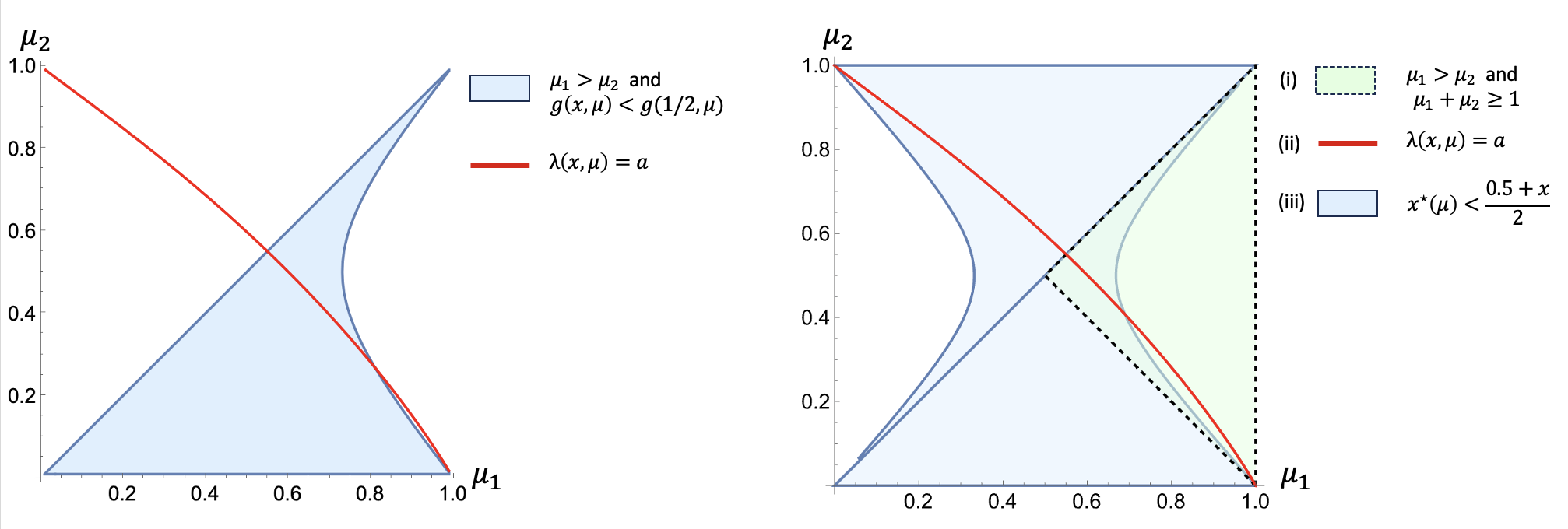}
\caption{Left: Visualization of Lemma~\ref{lem:prop g} with $a=0.55$ and $x=0.51$. The blue region indicates where $\mu_1>\mu_2$ and $g(x,\bmu)<g(1/2,\bmu)$. The red curve represents $\lambda(x,\bmu)=a$. The intersection of the blue region and red curve validates Lemma~\ref{lem:prop g}. Right: Visualization of Proposition~\ref{prop:intersection} with $a=0.55$ and $x=0.51$. The green region indicates \textnormal{(i)} $\mu_1>\mu_2,\,\mu_1+\mu_2\ge 1$. The red curve represents \textnormal{(ii)} $\lambda(x,\bmu)=a$. The blue region shows \textnormal{(iii)} $x^\star(\bmu)< (\frac{1}{2}+x)/2$.  The intersection of the three regions validates Proposition~\ref{prop:intersection}.}\label{fig:lem2_and_prop1}
\end{figure*}

\noindent
Plugging such $\bmu$ into (\ref{eq:uni2}) yields that 
\begin{align*}
     \liminf_{m \to \infty}\liminf_{\ell \to \infty}\frac{T_{m, \ell}}{\log (1/p_{\bmu,T_{m, \ell}})} \ge \frac{1}{g(x,\bmu)}
 >\frac{1}{g(1/2,\bmu)}.
\end{align*}
This contradicts the assumption that the algorithm is better than uniform.
\ep

\begin{proof}[Proof of Lemma~\ref{lem:prop g}]
We assume that $x \in (1/2, 1]$. The case for $x \in [0, 1/2)$ will be addressed at the end.
We first present Proposition~\ref{prop:intersection}, whose proof is given in Appendix~\ref{app:dual}, and its visualization is shown in the right panel of Figure~\ref{fig:lem2_and_prop1}.
\begin{proposition}\label{prop:intersection}
        For any $a\in (0,1),x\in (1/2,1]$, there exists an instance $\bmu\in \Lambda$ such that \textnormal{(i)} $\mu_1>\mu_2,\,\mu_1+\mu_2\ge 1$, \textnormal{(ii)} $\lambda(x,\bmu)=a$, and \textnormal{(iii)} $x^\star(\bmu)< (1/2+x)/2$.
\end{proposition}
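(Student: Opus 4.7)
The plan is to parametrize the constraint $\lambda(x,\bm)=a$ in logit coordinates and then split into two cases by the sign of $a-\tfrac12$. Introducing $u=\log(\mu_1/(1-\mu_1))$, $v=\log(\mu_2/(1-\mu_2))$, and $c=\log(a/(1-a))$, taking the logit of the closed form for $\lambda(x,\bm)$ turns the curve $\{\bm:\lambda(x,\bm)=a\}$ into the affine line $(1-x)u+xv=c$. On this line, $\mu_1>\mu_2$ is equivalent to $u>c$, and a short calculation gives $u+v=((2x-1)u+c)/x$, so $\mu_1+\mu_2\geq 1$ is equivalent to $u\geq -c/(2x-1)$ (using $x>\tfrac12$).

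\textbf{Case 1: $a<\tfrac12$.} Here $c<0$. The plan is to take the unique point on the line with $u+v=0$, i.e.\ $u=-c/(2x-1)>0>c$. Then $\mu_1+\mu_2=1$ with $\mu_1>\tfrac12>\mu_2$, so $\bm\in\Lambda$ and (i),(ii) hold. At such a $\bm$ one has $\mu_2=1-\mu_1$, and using $d(p,q)=d(1-p,1-q)$ inside the variational definition of $g$, together with the change of variable $\lambda\leftrightarrow 1-\lambda$ in the infimum, yields $g(y,\bm)=g(1-y,\bm)$ for every $y\in(0,1)$. By strict concavity this forces $x^\star(\bm)=\tfrac12$, which is strictly below $(\tfrac12+x)/2$ since $x>\tfrac12$.

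\textbf{Case 2: $a\geq\tfrac12$.} Here $c\geq 0$ and the symmetric point from Case~1 would violate $\mu_1>\mu_2$, so the plan is instead to pick $\bm$ on the line with $u$ slightly larger than $c$. The identity $u+v=((2x-1)u+c)/x$ shows $u+v>2c\geq 0$ whenever $u>c$, so (ii) holds automatically. To control $x^\star(\bm)$, we would Taylor-expand $f(y,\bm):=(1-\mu_1)^{1-y}(1-\mu_2)^y+\mu_1^{1-y}\mu_2^y$ around the excluded degenerate point $\bm_0=(a,a)$. Writing $\mu_i=a+t\delta_i$ with the direction $(\delta_1,\delta_2)$ dictated by the line (so $\delta_1\neq\delta_2$) and using $f(y,\bm_0)=1$ together with the cancellation of the $O(t)$ contributions between the two summands, one obtains
\[
g(y,\bm)\;=\;\frac{t^2\,y(1-y)(\delta_1-\delta_2)^2}{2a(1-a)}\;+\;O(t^3).
\]
The leading term is strictly maximized at $y=\tfrac12$, and strict concavity of $g(\cdot,\bm)$ combined with continuity of $\bm\mapsto x^\star(\bm)$ on $\Lambda$ then forces $x^\star(\bm)\to\tfrac12$ as $t\to 0^+$. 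Taking $t$ sufficiently small therefore yields $x^\star(\bm)<(\tfrac12+x)/2$.

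The main obstacle is the Taylor computation in Case~2: one must verify the exact cancellation of the $O(t)$ contributions between the $(1-\mu_1)^{1-y}(1-\mu_2)^y$ and $\mu_1^{1-y}\mu_2^y$ summands of $f$, and then extract the $O(t^2)$ term cleanly. Once this leading-order behavior of $g(\cdot,\bm)$ is in hand, the uniqueness of the maximizer of $y\mapsto y(1-y)$ together with the continuity of $x^\star$ closes the argument; Case~1 is essentially immediate once the flip-and-swap symmetry is spotted.
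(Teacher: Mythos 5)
Your argument is correct and follows the same overall architecture as the paper's proof: the paper also passes to natural (logit) coordinates $\xi_k=\log(\mu_k/(1-\mu_k))$, rewrites $\lambda(x,\bm)=a$ as the affine constraint $(1-x)\xi_1+x\xi_2=\alpha$ with $\alpha=\log(a/(1-a))$, splits on the sign of $\alpha$ (equivalently $a\lessgtr\tfrac12$), and in the case $\alpha<0$ selects the very same instance $\bxi=(-\alpha/(2x-1),\alpha/(2x-1))$ that you pick (the point on the constraint line with $\xi_1+\xi_2=0$). Where you genuinely diverge is in how condition (iii) is verified. In Case~1 you spot that $\mu_1+\mu_2=1$ gives the symmetry $g(y,\bm)=g(1-y,\bm)$, hence $x^\star(\bm)=\tfrac12$ exactly; the paper instead verifies, via Lemma~\ref{lem:intersection dual}, a Bregman-divergence inequality $\bar d(\xi_1,\tilde\xi)>\bar d(\xi_2,\tilde\xi)$ by direct computation. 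Your symmetry observation is cleaner. In Case~2, you Taylor-expand $g(y,\bm)$ itself to second order around the excluded diagonal point $(a,a)$ (your expansion $g(y,\bm)=t^2y(1-y)(\delta_1-\delta_2)^2/(2a(1-a))+O(t^3)$ is correct) and then argue that the argmax in $y$ converges to $\tfrac12$; the paper instead Taylor-expands the dual Bregman divergence (Lemma~\ref{lem:taylor}) and shows via a uniform comparison of $\phi''$ (Lemma~\ref{lem:second derivative}) that the dual inequality $(\overline{\textnormal{iii}})$ holds for all instances in a small half-disk, sidestepping any argmax-convergence step.

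One technicality you should tighten: you invoke ``continuity of $\bm\mapsto x^\star(\bm)$ on $\Lambda$'' to conclude $x^\star(\bm)\to\tfrac12$ as $t\to 0^+$, but $(a,a)\notin\Lambda$, so continuity on $\Lambda$ alone gives nothing at the boundary. The correct justification is an argmax-convergence argument: the Taylor remainder $O(t^3)$ is uniform in $y\in[0,1]$, so $g(\cdot,\bm(t))/t^2$ converges uniformly to $y\mapsto y(1-y)(\delta_1-\delta_2)^2/(2a(1-a))$, which is strictly concave with unique maximizer $\tfrac12$; uniform convergence of strictly concave functions to a strictly concave limit implies convergence of the maximizers. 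This is standard but needs to be stated; as written it reads like a gap. There is also a small typo — you write ``so (ii) holds automatically'' when you mean the $\mu_1+\mu_2\geq1$ part of (i). Finally, be explicit that you parameterize along the curve $\lambda(x,\cdot)=a$ (not along a straight line in $\bm$-space), with $\delta$ the tangent direction at $(a,a)$; the fact that $g$ and its $\bm$-gradient both vanish at $(a,a)$ makes the leading $t^2$ coefficient depend only on $\delta$, so the second-order-in-$t$ corrections to the curve do not affect your expansion.
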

Let $\bmu\in \Lambda$ be an instance satisfying the conditions of Proposition~\ref{prop:intersection}. 
If $x^\star(\bmu) \le 1/2$, the strict concavity of $g(\cdot,\bmu)$ immediately implies that $g(1/2,\bmu) > g(x,\bmu)$. On the other hand, if $x^\star(\bmu) > 1/2$, we can observe that $x^\star(\bmu) < (1/2 + x)/2 \le 3/4$, which leads to $\delta = x^\star(\bmu) - 1/2 \le \min\{x^\star(\bmu), 1 - x^\star(\bmu)\}$. 
We use the following proposition.
\begin{proposition}\label{prop:asymmetry}
      Suppose $\mu_1>\mu_2$ and $\mu_1+\mu_2\ge 1$. For any positive $\delta\le \min \{x^\star(\bmu),1-x^\star(\bmu)\},$
    $g(x^\star(\bmu)-\delta,\bmu)\ge g(x^\star(\bmu)+\delta,\bmu)$.
\end{proposition}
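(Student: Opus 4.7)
The plan is to reduce the asymmetry statement to a one-variable inequality between hyperbolic sines, in which the hypothesis $\mu_1+\mu_2\ge 1$ appears naturally.

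First, I would rewrite the objective using the closed form \eqref{eq:g alt}: setting
$$H(x)=(1-\mu_1)^{1-x}(1-\mu_2)^x+\mu_1^{1-x}\mu_2^x=(1-\mu_1)e^{\alpha x}+\mu_1 e^{-\beta x},$$
with $\alpha:=\log\frac{1-\mu_2}{1-\mu_1}$ and $\beta:=\log\frac{\mu_1}{\mu_2}$ (both strictly positive since $\mu_1>\mu_2$), the target inequality $g(x^\star-\delta,\bm)\ge g(x^\star+\delta,\bm)$ is equivalent, since $g=-\log H$, to $H(x^\star+\delta)\ge H(x^\star-\delta)$.

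Next, I would use the interior first-order condition $H'(x^\star)=0$, which reads $\alpha(1-\mu_1)e^{\alpha x^\star}=\beta\mu_1 e^{-\beta x^\star}$. Expanding $H(x^\star\pm\delta)=(1-\mu_1)e^{\alpha x^\star}e^{\pm\alpha\delta}+\mu_1 e^{-\beta x^\star}e^{\mp\beta\delta}$ and subtracting, the exponentials collapse into hyperbolic sines, and substituting the first-order condition to eliminate $\mu_1 e^{-\beta x^\star}$ in favor of $(1-\mu_1)e^{\alpha x^\star}$, the difference $H(x^\star+\delta)-H(x^\star-\delta)$ becomes a strictly positive multiple of
$$\beta\sinh(\alpha\delta)-\alpha\sinh(\beta\delta).$$
So it suffices to prove $\sinh(\alpha\delta)/\alpha\ge \sinh(\beta\delta)/\beta$. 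The map $s\mapsto\sinh(s\delta)/s$ is strictly increasing on $(0,\infty)$---this is visible directly from the power series $\sinh(s\delta)/s=\sum_{k\ge 0}\delta^{2k+1}s^{2k}/(2k+1)!$---so the inequality is reduced to $\alpha\ge\beta$.

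Finally, I would match the hypothesis with the condition $\alpha\ge\beta$. A short calculation gives $\alpha\ge\beta\iff \mu_2(1-\mu_2)\ge \mu_1(1-\mu_1)$, which factors as $(\mu_1-\mu_2)(\mu_1+\mu_2-1)\ge 0$; both factors are nonnegative precisely under the assumptions $\mu_1>\mu_2$ and $\mu_1+\mu_2\ge 1$, closing the argument. The main obstacle is the first-to-second step: recognizing that exploiting the first-order condition makes the problem collapse to the single $\sinh$-monotonicity inequality above, and then observing that the very symmetric-looking condition $\mu_1+\mu_2\ge 1$ is exactly what guarantees $\alpha\ge\beta$. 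Once this reformulation is in place, the remaining steps are routine algebra.
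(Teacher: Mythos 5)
Your proof is correct and follows essentially the same route as the paper's: both rewrite $\hat g=e^{-g}$ as a sum of two exponentials, use the first-order condition at $x^\star$ to eliminate one coefficient, and reduce the asymmetry to the inequality $\frac{1-\mu_2}{1-\mu_1}\ge\frac{\mu_1}{\mu_2}$, which is exactly what $\mu_1>\mu_2$ together with $\mu_1+\mu_2\ge 1$ delivers (this is Lemma~\ref{lem:entropy} in the paper). The only small difference is in how the hyperbolic step is closed: the paper shows $f(\delta)=\hat g(x^\star+\delta,\bm)-\hat g(x^\star-\delta,\bm)\ge 0$ by differentiating in $\delta$ and invoking the mean value theorem together with the monotonicity of $z\mapsto z^\delta+z^{-\delta}$ for $z>1$, whereas you argue directly via the monotonicity of $s\mapsto\sinh(s\delta)/s$, a clean shortcut that skips the derivative-and-MVT step.
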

The proof and visualization of Proposition~\ref{prop:asymmetry} can be found in Appendix~\ref{app:asym} and Figure~\ref{fig:prop2}, respectively.  Setting $\delta = x^\star(\bmu) - 1/2$, we obtain the following inequality 
$$
g(1/2,\bmu) = g(x^\star(\bmu) - \delta,\bmu) \ge g(x^\star(\bmu) + \delta,\bmu).
$$
Using the strict concavity of $g(\cdot,\bmu)$ again and the fact that $x^\star(\bmu)+\delta=2x^\star(\bmu)-1/2<x$, we derive that $g(1/2,\bmu) \ge g(x^\star(\bmu)+\delta,\bmu)>g(x,\bmu)$. This concludes the proof when $x \in (1/2, 1]$.

Next, we consider the proof when $x \in [0,1/2)$. To this aim, we use the following symmetrical property of $g(x, \bmu)$.
\begin{proposition}\label{prop:reflect}
     Denote $\bar{\bmu}=(1-\mu_2,1-\mu_1).$
    For any $x\in (0,1)$, for any $\bmu\in \Lambda$, $g(1-x,\bar{\bmu})=g(x,\bmu)$ and $\lambda (1-x,\bar{\bmu})=1-\lambda (x,\bmu)$.
\end{proposition}
The proof of Proposition~\ref{prop:reflect} is presented in Appendix~\ref{app:reflect}.
The previous proof (replacing $x$ with $1-x \in (1/2, 1]$) yields the existence of $\bmu \in \Lambda$ such that $g(1-x,\bmu)<g(1/2, \bmu)$ and $\lambda(1-x,\bmu)=1-a$. Let $\bar{\bmu}=(1-\mu_2,1-\mu_1)$, Proposition~\ref{prop:reflect} and the strict concavity of $g(\cdot,\bmu)$ imply that
$$
g(x,\bar{\bmu})=g(1-x,\bmu)<g(1/2, \bmu)=g(1/2,\bar{\bmu}),
$$
and
$\lambda(x,\bar{\bmu})=1-\lambda(1-x,\bmu)=a.$ This concludes the proof for $x \in [0, 1/2)$, thus completing the proof of Lemma~\ref{lem:prop g}.
\end{proof}

\begin{figure*}[ht]
\centering
\includegraphics[width=0.8\textwidth]{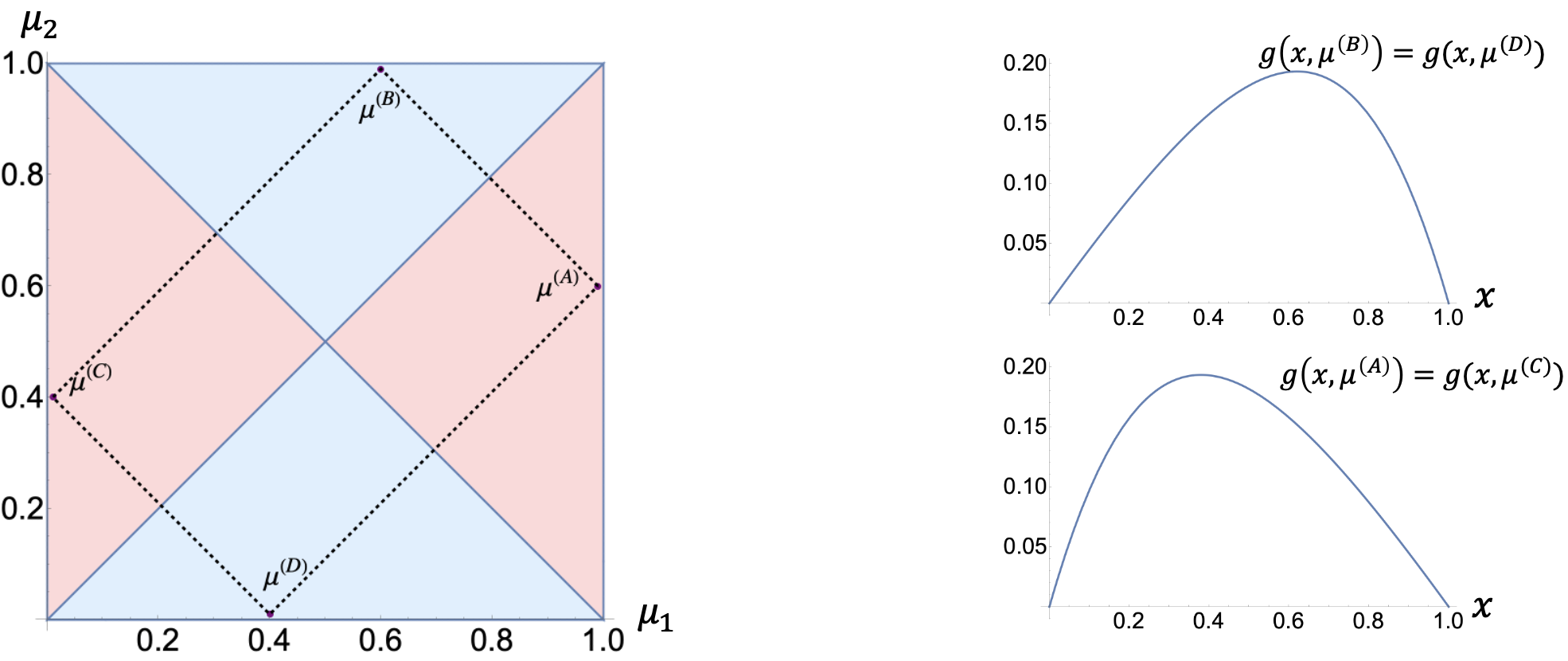}
\caption{
Visualization of the function $g(x, \bmu)$ properties. The left panel shows the partition of $\Lambda$ into four regions by $\mu_1=\mu_2$ and $\mu_1+\mu_2=1$, with blue indicating $ x^\star(\bmu)<1/2$ and red indicating $ x^\star(\bmu)>1/2$. Four bandit instances are chosen symmetrically from these regions for further analysis. The right panels show the functions $g(x,\bmu^{(A)})=g(x,\bmu^{(C)})$ (top) and $g(x,\bmu^{(B)})=g(x,\bmu^{(D)})$ (bottom), demonstrating the asymmetrical property as stated in Proposition~\ref{prop:asymmetry}.
}\label{fig:prop2}
\end{figure*}

\section{Error Rate of Consistent and Stable Algorithms}\label{sec:LB}

In this section, we establish that the performance of any stable and consistent algorithm is either equivalent to or worse than that of uniform sampling. 

\begin{theorem}\label{thm:stable}
    If an algorithm is consistent and stable, then 
    $$
   \forall \bmu\in \Lambda,\quad \liminf_{T\rightarrow \infty} \frac{T}{\log (1/p_{\bmu ,T})}  \ge \frac{1}{g(1/2,\bmu)}.
    $$
\end{theorem}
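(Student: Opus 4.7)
The idea is a change-of-measure argument in which the alternative bandit is chosen so that (i) it lies on the side of the instance $(a,a)$ where arm 2 is the best, opposite to the order in $\bm$, and (ii) its common coordinate $a$ equals the unique minimizer of the infimum defining $g(1/2,\bm)$. Part (B) of stability then forces the algorithm's asymptotic sampling proportions under this alternative to be exactly $(1/2,1/2)$, which is the weight pattern appearing in the infimum defining $g(1/2,\bm)$. This alignment is precisely what makes the resulting transportation inequality tight.

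Fix $\bm \in \Lambda$; without loss of generality assume $\mu_1 > \mu_2$ (the case $\mu_2 > \mu_1$ is symmetric and uses part (A) of stability instead of (B)). Set $a := \lambda(1/2, \bm) \in (0,1)$, the unique minimizer in (\ref{eq:g}) at $x=1/2$, so that $g(1/2,\bm) = \tfrac{1}{2} d(a,\mu_1) + \tfrac{1}{2} d(a,\mu_2)$. Apply part (B) of stability at this $a$ to obtain a sequence $\{\bp^{(n)}\}$ with $\pi_1^{(n)} < \pi_2^{(n)}$, $\bp^{(n)} \to (a,a)$, and $\lim_n \lim_T \EE_{\bp^{(n)}}[\omega_2(T)] = 1/2$. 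Under each $\bp^{(n)}$ the best arm is arm 2, so consistency gives $\PP_{\bp^{(n)}}[\hat{\imath}=2] = 1 - p_{\bp^{(n)},T} \to 1$ as $T \to \infty$.

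The standard transportation inequality, applied exactly as in (\ref{eq:uni1}) but with $\bp^{(n)}$ and $\bm$ exchanged, combined with $d(p,q) \ge p\log(1/q) - \log 2$ on the event $\{\hat{\imath}=2\}$, yields, after dividing by $T$ and rearranging,
$$
\frac{T}{\log(1/p_{\bm,T})} \ge \frac{1 - p_{\bp^{(n)},T}}{\EE_{\bp^{(n)}}[\omega_1(T)]\, d(\pi_1^{(n)},\mu_1) + \EE_{\bp^{(n)}}[\omega_2(T)]\, d(\pi_2^{(n)},\mu_2) + \log 2 / T}.
$$
Taking $\liminf_{T \to \infty}$ at fixed $n$ (using consistency on $\bp^{(n)}$ and the existence of $\lim_T \EE_{\bp^{(n)}}[\omega_k(T)]$ guaranteed by stability), then $n \to \infty$ (using $\bp^{(n)} \to (a,a)$, continuity of $d$ at interior points, and the iterated stability limit $\lim_n \lim_T \EE_{\bp^{(n)}}[\omega_2(T)] = 1/2$) delivers the desired bound $\liminf_T T/\log(1/p_{\bm,T}) \ge 1/g(1/2,\bm)$.

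I do not expect a conceptual obstacle: the transportation inequality and the logarithmic lower bound on $d$ both appear already in the proof of Theorem~\ref{thm:uni}, and the specific choice of $a$ together with stability at $a$ is exactly what makes the right-hand side above converge to $g(1/2,\bm)$. The only delicate point is the iterated-limit bookkeeping — one must take $T \to \infty$ first at fixed $n$ and only afterwards let $n \to \infty$ — but this ordering is precisely what Definition~\ref{def:stable} supplies, which is why stability as an iterated-limit condition, rather than some weaker joint-limit notion, is the right hypothesis here.
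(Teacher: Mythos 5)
Your proposal is correct and follows essentially the same route as the paper's proof: fix $a=\lambda(1/2,\bm)$, invoke part (B) of stability to obtain $\bp^{(n)}\to(a,a)$ with $\pi_1^{(n)}<\pi_2^{(n)}$ and $\lim_n\lim_T\EE_{\bp^{(n)}}[\omega_2(T)]=1/2$, apply the change-of-measure inequality together with $d(p,q)\ge p\log(1/q)-\log 2$, divide by $T$, take $T\to\infty$ (using consistency to send $1-p_{\bp^{(n)},T}\to 1$), and finally let $n\to\infty$. One small slip: the phrase ``with $\bp^{(n)}$ and $\bm$ exchanged'' is not what your displayed inequality actually does — it uses exactly the same roles as (\ref{eq:uni1}), expectations under $\bp^{(n)}$ and divergences $d(\pi_k^{(n)},\mu_k)$, which is the correct choice — so the remark is a verbal inaccuracy that does not affect the argument.
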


\begin{proof}
     Without loss of generality, assume $1(\bmu)=1$, namely, $\mu_1>\mu_2$. For any $\bp \in \Lambda$ such that $\pi_1 < \pi_2$, applying a standard change-of-measure argument, as in the proof of Theorem \ref{thm:uni}, yields that
\begin{align}\label{neq:stable1}
  \nonumber  & \EE_{\bp}[N_1(T)]d(\pi_1,\mu_1)+\EE_{\bp}[N_2(T)]d(\pi_{2},\mu_2)
 \\
 \nonumber  &\ge d(\PP_{\bp}[\hat{\imath}=2],\PP_{\bmu}[\hat{\imath}=2])
 \\
    &\ge \PP_{\bp}[\hat{\imath}=2]\log\left(1/p_{\bmu,T}\right) -\log 2.
\end{align}
Since $\pi_1<\pi_2$, the consistent assumption yields that $\PP_{\bp}[\hat{\imath}=2]=1-p_{\bp,T}\xrightarrow{T\rightarrow\infty}1$. Dividing the both sides of (\ref{neq:stable1}) by $T$ and taking $T\rightarrow\infty$, we obtain
\begin{align}
& \liminf_{T\rightarrow \infty} \frac{T}{\log (1/p_{\bmu ,T})}  \label{neq:stable2}
\\
\nonumber & \ge \liminf_{T\rightarrow \infty} \frac{1}{  \EE_{\bp}[\omega_1(T)] d(\pi_1,\mu_1)+ \EE_{\bp}[\omega_2(T)]d(\pi_2,\mu_2) }
\end{align}
by simply rearranging the terms. Next, we let $a=\lambda(\frac{1}{2},\bmu)$. Since the algorithm is stable, there exists $\{\bp^{(n)}\}_{n=1}^\infty\subset \Lambda$ such that $\pi^{(n)}_1<\pi^{(n)}_2$, $\bp^{(n)}\xrightarrow{n\rightarrow\infty}(a,a)$, and 
\begin{align*}
      \lim_{n\rightarrow \infty} \liminf_{T\rightarrow \infty}\EE_{\bp^{(n)}}[\omega_2(T)] &=\lim_{n\rightarrow \infty} \limsup_{T\rightarrow \infty}\EE_{\bp^{(n)}}[\omega_2(T)]  =\frac{1}{2}. 
\end{align*}
Notice that previous equations also imply that 
\begin{align}
   \lim_{n\rightarrow \infty} \lowlim_{T\rightarrow \infty}\EE_{\bp^{(n)}}[\omega_2(T)]&=\lim_{n\rightarrow \infty} \lowlim_{T\rightarrow \infty}(1-\EE_{\bp^{(n)}}[\omega_1(T)])\nonumber
   \\
   &=1-\lim_{n\rightarrow \infty} \uplim_{T\rightarrow \infty}\EE_{\bp^{(n)}}[\omega_1(T)]\nonumber
   \\
   &= \frac{1}{2}.\label{neq:stable4}
\end{align}
Thus, rearranging \eqref{neq:stable4} yields that 
\begin{equation*}%
    \lim_{n\rightarrow \infty} \limsup_{T\rightarrow \infty}\EE_{\bp^{(n)}}[\omega_1(T)] = \frac{1}{2}.
\end{equation*}
\noindent
Plugging them into (\ref{neq:stable2}) and taking $n$ to infinity yields that 

\begin{align*}
    \liminf_{T\rightarrow \infty} \frac{T}{\log (1/p_{\bmu ,T})}&\ge  \lim_{n\to\infty}(A_n)^{-1}
    =\Big(\lim_{n\to\infty}A_n\Big)^{-1}
    \\
    &\ge \frac{2}{d(a,\mu_1)+d(a,\mu_2)}= \frac{1}{g(1/2,\bmu)}.
\end{align*}
where 
$A_n$ is the limit superior of the sequence
$$
\big( \EE_{\bp^{(n)}}[\omega_1(T)] d(\pi_1^{(n)},\mu_1)
    + \EE_{\bp^{(n)}}[\omega_2(T)]d(\pi_2^{(n)},\mu_2)\big)_{T\in \NN}.
$$
\end{proof}

We remark that the combination of Theorems~\ref{thm:uni} and \ref{thm:stable} leads to Theorem~\ref{thm:main}.

\section{$K$-Armed Bandits with $K\ge 3$}

Extending our results to the general case where there are more than two arms is challenging. We investigate whether existing adaptive algorithms could be better than uniform algorithms. This question is not easy to answer because existing analyses of these algorithms provide {\it upper} bounds only on their error rates. Even if these upper bounds are, for some instances, worse than the error rate of the uniform sampling algorithm, it does not imply that the latter performs better on these instances. To answer the question, we also need to derive {\it lower} bounds on their error rates (which is challenging -- refer to \cite{wang2023best} for a detailed discussion).

In this section, we restrict our attention to the celebrated Successive Rejects (SR) algorithm \citep{audibert2010best}, and we manage to derive the exact expression of its asymptotic error rate. From there, we exhibit instances where surprisingly, the uniform sampling algorithm provably outperforms the SR algorithm. 

\subsection{The Successive Rejects Algorithm}

\begin{algorithm}
\caption{Successive Rejects (SR)}
\begin{algorithmic}\label{alg:SR}
\STATE {\bf} Initialization $\mathcal{C}_K\leftarrow [K]$, $j\leftarrow K$;
\FOR{$t=1,2,\ldots, T$}
    \IF{($j>2$ \textrm{and} $\min_{k\in \mathcal{C}_j} N_k(t)\ge \frac{T}{j\olog K}$)}
    \STATE $\ell_j\leftarrow \argmin_{k\in \mathcal{C}_j} \hat{\mu}_k(t)$ (tie broken arbitrarily), $\mathcal{C}_{j-1}\leftarrow \mathcal{C}_j\setminus \{\ell_j\}$, and $j\leftarrow j-1$	
    \ENDIF
    \STATE Sample $A_t\leftarrow \argmin_{k\in \mathcal{C}_{j}} N_k(t)$ (tie broken arbitrarily), 
		update $\{N_k(t)\}_{k\in \mathcal{C}_j} $ and $\hat{\bmu}(t)$
\ENDFOR
 \STATE $\ell_2\leftarrow \arg\min_{k\in \mathcal{C}_2}\hat{\mu}_k(T)$ (tie broken arbitrarily)
 \STATE Return $\hat{\imath}\leftarrow \arg\max_{k\in \mathcal{C}_2}\hat{\mu}_k(T)$ (tie broken arbitrarily)
\end{algorithmic}
\end{algorithm}

Consider the FB-BAI problem with $K$ arms as described in the introduction. For this problem, the SR algorithm starts by initializing the set of candidate arms as $\mathcal{C}_K=[K]$. The sampling budget is partitioned into $K-1$ phases. Following each phase, SR discards the empirically determined worst-performing arm from the candidate set. During each phase, SR adopts a uniform sampling strategy for the arms within the candidate set. 

The phase lengths are determined as follows. Define $\olog K\coloneqq1/2+\sum_{k=2}^K 1/k$. When the candidate set, denoted by $\mathcal{C}_j$, comprises more than two arms, i.e., $j> 2$, in the corresponding phase, SR works as follows: (i) each arm within $\mathcal{C}_j$ is sampled until the round $t$ at which $\min_{k\in \mathcal{C}_j}N_k(t)\ge T/(j\olog K)$. %
(ii) the arm identified as the empirical worst, denoted by $\ell_j$, is then discarded, which means $\mathcal{C}_{j-1}=\mathcal{C}_j\setminus \{\ell_j\}$. During the final phase, SR samples the two remaining arms evenly and recommends $\hat{\imath}$, the arm that exhibits the higher empirical mean in $\mathcal{C}_2$. Algorithm \ref{alg:SR} presents the pseudo-code of SR.

\subsection{Exact Analysis of SR}\label{sec:tight}
In Theorem 2 in \cite{wang2023best}, the authors show that SR satisfies for any $\bmu\in \Lambda$
\begin{equation}\label{eq:SR upper}
     \uplim_{T\rightarrow \infty} \frac{T}{\log (1/p_{\bmu ,T})}\le \max_{j=2,\ldots, K}\frac{j\olog K}{\Gamma_j(\bmu)}.
\end{equation}
where
\begin{equation*}%
\Gamma_j(\bmu)\coloneqq\min_{J\in \mathcal{J}_j(\bmu)}\inf_{\bl\in \Lambda:\lambda_{1(\bmu)}\le \min_{k\in J} \lambda_k}\sum_{k\in J}d(\lambda_k,\mu_k),
\end{equation*}
and $\mathcal{J}_j(\bmu)\coloneqq\{J\subseteq [K]:\left|J\right|=j,1(\bmu)\in J\}$. We show the bound (\ref{eq:SR upper}) is in fact tight. Indeed, in the following theorem whose proof is presented in Appendix~\ref{app:sr_K}, we derive a matching lower bound. 

\begin{theorem}\label{thm:tight_analysis_SR}
    Under the Successive Rejects algorithm \cite{audibert2010best}, for any $\bmu \in \Lambda$, 
    \begin{equation*}%
     \lim_{T\rightarrow \infty} \frac{T}{\log (1/p_{\bmu ,T})}= \max_{j=2,\ldots, K}\frac{j\olog K}{\Gamma_j(\bmu)}.
\end{equation*}
\end{theorem}

\subsection{Instances Where Uniform Sampling Outperforms SR}\label{sec:uni_and_SR}

We can use Theorem \ref{thm:tight_analysis_SR} to assess whether the SR algorithm is better than uniform. The next theorem shows that it is not even for three-armed bandits.

\begin{theorem}\label{thm:sr}
    There exists a three-armed bandit instance in which uniform sampling strictly outperforms SR asymptotically.
\end{theorem}
\begin{proof}
    From Theorem~\ref{thm:tight_analysis_SR}, for any $\bmu\in \Lambda$, the error rate of SR satisfies 
\begin{equation}\label{eq:sr_3}
    \lim_{T\rightarrow\infty}  \frac{T}{\log (1/p_{\bmu ,T})}=\max\left\{\frac{8}{3\Gamma_2(\bmu)},\frac{4}{\Gamma_3(\bmu)}\right\}.
\end{equation}
As for uniform sampling, the error rate satisfies \cite{glynn2004large} 
\begin{equation}\label{eq:uni_3}
     \lim_{T\rightarrow\infty}  \frac{T}{\log (1/p_{\bmu ,T})}=\frac{3}{\Gamma_2(\bmu)}.
\end{equation}
Thank to Proposition~\ref{prop:Gammaj} in Appendix~\ref{app:Gamma}, we can compute $\Gamma_2(0.5,0.3,0.3)\approx 0.0426387$  and $\Gamma_3(0.5,0.3,0.3)\approx 0.0562588$, which implies 
$$
(\ref{eq:sr_3})\approx71.1>70.3587\approx (\ref{eq:uni_3}) .
$$
We conclude that SR has a higher error rate than the uniform sampling algorithm in the instance $(0.5,0.3,0.3)$.
\end{proof}

We can use the results from Proposition~\ref{prop:Gammaj} in Appendix~\ref{app:Gamma} to numerically compare the error rates of the SR and uniform sampling algorithms. In Figure~\ref{fig:SR3}, we plot the set of instances $\bmu$ such $\mu_1>\mu_2=\mu_3$ where the SR algorithm has an higher error rate than the uniform sampling algorithm. 

\begin{figure}[h]
\centering
\includegraphics[width=0.34\textwidth]{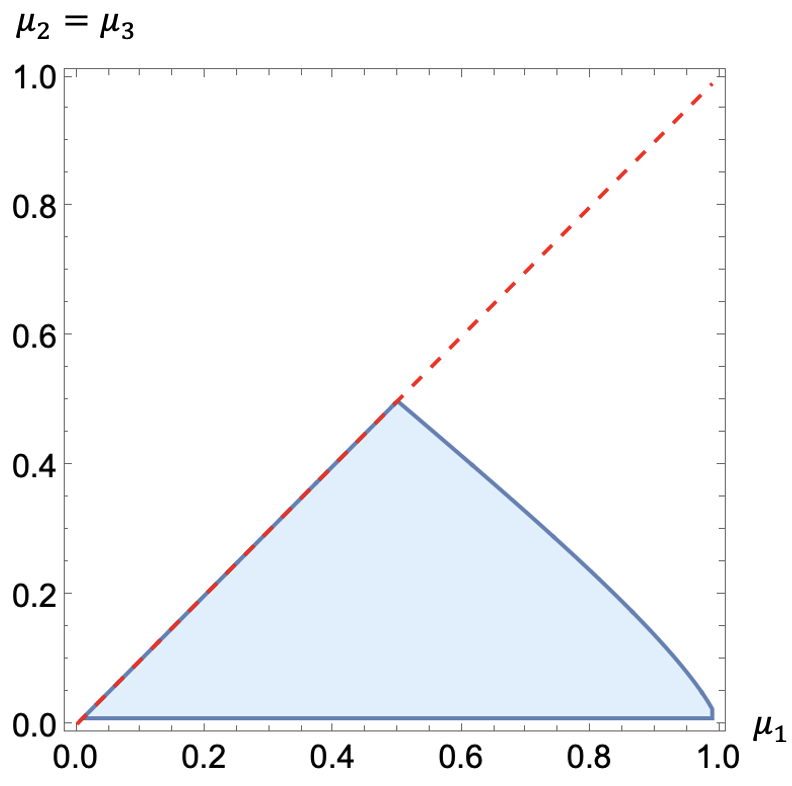}
\caption{The blue area corresponds to instances $\bmu=(\mu_1,\mu_2,\mu_2)$ such that $\mu_1>\mu_2=\mu_3$ and such that the uniform sampling algorithm strictly outperforms the SR algorithm. The red dashed line is the set of instances such that $\mu_1=\mu_2$.}\label{fig:SR3}
\end{figure}

\section{Related Work and Discussion}\label{sec:related}

In this section, we start by stating rigorously the two open problems in \cite{qin2022open} and that we address in this paper. We then present the related work for the FB-BAI problem in general, and finally discuss existing results for the two-arm case.   

\subsection{Open Problems Stated in \citet{qin2022open}}\label{subsec:related_open_qin}

{\bf Problem 1 \cite{qin2022open}} consists of investigating whether there exists a better than uniform algorithm that strictly outperforms uniform sampling on some instances. Based on Theorem~\ref{thm:main}, we can conclude that no such algorithms exist in two-armed bandits.

\medskip
\noindent
{\bf Problem 2 \cite{qin2022open}} consists in investigating whether the two following properties can hold simultaneously:
\begin{itemize}
\item[(a)] {\it Lower bound.} There exist an algorithm class $\mathcal{A}$ and a function $\Gamma^\star: \Lambda\mapsto \RR$ such that for any algorithm in $\mathcal{A}$,
\begin{equation*}%
    \forall \bmu\in \Lambda, \quad  \liminf_{T\rightarrow \infty} \frac{T}{\log (1/p_{\bmu,T})}\ge \Gamma^\star (\bmu).
\end{equation*}
\item[(b)]{\it Upper bound.} There is a single algorithm in $\mathcal{A}$ satisfies
\begin{equation*}%
    \forall \bmu\in \Lambda, \quad  \limsup_{T\rightarrow \infty} \frac{T}{\log (1/p_{\bmu,T})}\le \Gamma^\star (\bmu).
\end{equation*}
\end{itemize}
For problems with more than two arms, \cite{garivier2016optimal} conjecture that the lower bound discussed above for two-arm bandits can be generalized. However, again, \cite{ariu2021policy} prove that for any algorithm, when the number of arms is large, there exists at least one instance where the algorithm cannot reach the lower bound.
Our Theorem~\ref{thm:stable} addresses this open problem by considering ${\cal A}$ as the set of consistent and stable algorithms, by setting 
$$\Gamma^\star(\bmu)=\frac{1}{g(1/2,\bmu)},
$$
and by using the fact that the uniform sampling algorithm matches the corresponding lower bound.

\subsection{Fixed-Budget Best-Arm Identification}

The study of fixed-budget best-arm identification has been relatively recent \citep{audibert2010best, bubeck2011pure}, especially when compared to regret minimization \citep{lai1985asymptotically,cappe2013kullback} and fixed-confidence best-arm identification \citep{chernoff1959sequential,even2006action,garivier2016optimal}.  Since then, algorithms such as Successive Rejects (SR) \citep{audibert2010best}, Sequential Halving \citep{karnin2013almost}, and UGapE \citep{gabillon2012best} have been proposed with performance guarantees.

A first lower bound on the error rate for FB-BAI has been proposed in \cite{audibert2010best}. They prove that for any algorithm that knows the reward distributions of the arms, but does not know the order in which they correspond to the arms, there exists a bandit instance such that the probability of misidentification is lower bounded by $\exp(-\frac{c T}{H_2})$, where $c>0$ is some universal constant and $H_2 \coloneqq \max_{k\neq 1(\bmu)} \frac{k}{(\mu_{1(\bmu)}-\mu_k)^2}$.

In \cite{carpentier2016tight}, the authors prove that for any algorithm, there exist bandit instances such that the probability of error for one of the instances is lower bounded by $\exp(-\frac{400 T}{H_2 \log K})$, where $K$ is the number of the arms. The authors of \cite{ariu2021policy} revisit this lower bound, and demonstrate that no algorithm can universally achieve the same error probability as the best static algorithm, particularly when the number of arms is large. 

\cite{komiyama2022minimax} presents a minimax characterization of the error probability. They also conjecture that exploration that adapts to the instance is costly.  

The recent study \cite{wang2023best} investigates the FB-BAI problem using large deviation techniques. They relate the error probability to the Large Deviation Principle satisfied by the stochastic process capturing the empirical proportions of arm pulls and the sample means. 
Leveraging the connection, they not only enhance the guarantee of the Successive Rejects (SR) but also devise and analyze a novel algorithm with more adaptive rejections, Continuous Rejects (CR). The CR algorithm demonstrates superior performance both theoretically and numerically. Note however, that as SR, the CR algorithm is identical to the uniform sampling algorithm in bandit problems with two arms.

\subsection{A/B Testing}
For two-armed bandits, in \cite{kaufmann2014complexity,kaufmann2016complexity}, the authors try to provide a characterization
of the minimal instance-specific error probability for fixed-budget best-arm identification. For Bernoulli rewards, they establish a lower bound of this probability satisfied by any consistent algorithm:
\begin{align}\label{eq:lower_kaufmann}
   \forall \bmu\in \Lambda,\quad \liminf_{T\rightarrow \infty} \frac{T}{\log (1/p_{\bmu ,T})}  \ge \min_{x \in (0,1)}\frac{1}{g(x,\bmu)}.
\end{align}
\cite{kaufmann2014complexity,kaufmann2016complexity} also note, as in \cite{glynn2004large}, that the best static algorithm (that requires the knowledge of $\bmu$) matches this lower bound. They do not find an adaptive algorithm that universally matches the lower bound across all bandit instances. Our results state that this is indeed impossible.

Now for Gaussian rewards, \cite{kaufmann2014complexity,kaufmann2016complexity} derive an error probability lower bound satisfied by consistent algorithms. They also show that when the learner is aware of the variances of the rewards, one may find an algorithm whose performance matches this lower bound on all instances. This algorithm is a static algorithm that pulls the first arm $\sigma_1/(\sigma_1 + \sigma_2) T + o(T)$ times and the second arm $\sigma_2/(\sigma_1 + \sigma_2) T + o(T)$ times, where $\sigma_1^2$ and $\sigma_2^2$ are the variances of the rewards of arm 1 and 2, respectively. 

In \cite{degenne2023existence}, the author shows that, for Bernoulli bandits, a universally optimal algorithm matching the lower bound \eqref{eq:lower_kaufmann} does not exist.
Specifically, for any algorithm, there exists an instance and a static algorithm such that the considered algorithm performs strictly worse than the best static algorithm on the instance. 
Essentially, characterizing the instance-specific minimal error rate within a class of algorithms that includes all static algorithms is impossible. In this paper, we show that adaptive algorithms cannot even compete with a single static algorithm, namely the uniform sampling algorithm, on all instances.

\section{Conclusion}

In this paper, we investigated the problem of finding universally optimal algorithms for Fixed-Budget Best-Arm Identification (FB-BAI) in stochastic multi-armed bandits with Bernoulli rewards.
We found that, surprisingly, for two-armed bandits (the A/B testing problem), no algorithm strictly outperforms the uniform sampling algorithm.
We actually proved that within a natural and wide class of consistent and stable algorithms, uniform sampling is universally optimal.
Extending these results to the case of more than two arms is challenging. 
So far we have not found any adaptive algorithm outperforming uniform sampling for all instances.
For example, we were able to exactly characterize the asymptotic error probability of the celebrated Successive Rejects (SR) algorithm. 
As it turns out, SR is outperformed by uniform sampling in some instances.

Our study advances the understanding of the FB-BAI problem. 
However, we obtained a complete picture only for A/B testing with Bernoulli rewards. 
A similar picture is for now out of reach for general reward distributions. 
Indeed, the minimal error probability is not known even in the case of Gaussian reward distributions with unknown variances or in the case where these distributions are within one parameter exponential family. 
In the latter case, we conjecture that uniform sampling remains universally optimal. 
For problems involving more than two arms, the FB-BAI problem becomes even more challenging. We are currently working on extending the notion of stable algorithms, and on comparing their performance to that of the uniform sampling algorithm.

\section*{Impact Statement}
This paper focuses on discussing the existence of universally optimal algorithms for best-arm identification. 
It presents work whose goal is to advance the field of Machine Learning. There are many potential societal consequences of our work, none which we feel must be specifically highlighted here.

\section*{Acknowledgments}
We would like to express our gratitude to Chao Qin for meticulously examining the proofs in our early draft and identifying their incompleteness. His insightful comments led us to refine our definition of stable algorithms. 
Kaito Ariu's research is supported by JSPS KAKENHI Grant No. 23K19986. Alexandre Proutiere is supported by the Wallenberg AI, Autonomous Systems and Software Program (WASP) funded by the Knut and Alice Wallenberg Foundation, the Swedish Research Council (VR) and Digital Futures.

\bibliography{ref.bib,ref2.bib}
\bibliographystyle{icml2024}

\newpage
\appendix
\onecolumn
\section{Proof of Proposition \ref{prop:intersection}}\label{app:dual}
This section aims to prove Proposition \ref{prop:intersection}, restated below for convenience.

\begin{repproposition}{prop:intersection}
        For any $a\in (0,1),x\in (\frac{1}{2},1]$, there exists an instance $\bmu\in \Lambda$ such that \textnormal{(i)} $\mu_1>\mu_2,\,\mu_1+\mu_2\ge 1$, \textnormal{(ii)} $\lambda(x,\bmu)=a$, and \textnormal{(iii)} $x^\star(\bmu)< (\frac{1}{2}+x)/2$.
\end{repproposition}

\begin{proof}
Bernoulli distributions belong to the single-parameter exponential family. Thus, there is a strictly convex function $\phi:\RR\mapsto \RR$ specific to these distributions. We denote by $\bar{d}$ the corresponding Bregman divergence. More precisely, $\phi(\xi)]\coloneqq\log (1+e^\xi)$ with $\phi'(\xi)=\frac{e^\xi}{1+e^\xi }$ and $\phi'^{-1}(\mu)=\log \frac{\mu}{1-\mu}$. For a given $\bmu\in \Lambda$, there is a parameter $\bxi\in \RR^2$ with $\xi_1=\phi'^{-1}(\mu_1)$ and $\xi_2=\phi'^{-1}(\mu_2)$ (note that $\phi'$ is an invertible function). Let $\bar{\Lambda}=\{\bxi\in \RR^2:\xi_1\neq \xi_2\}=\phi'^{-1}(\Lambda)$ as the set of all parameters. 
$d(\mu_1,\mu_2)$ can be written as:
\begin{equation}\label{eq:dual_representation}
    d(\mu_1,\mu_2)=\bar{d}(\xi_2,\xi_1)=\phi(\xi_2)-\phi(\xi_1)-(\xi_2-\xi_1)\phi'(\xi_1).
\end{equation}
Following this formalism, we can present the conditions $(\overline{\textnormal{i}})$, $(\overline{\textnormal{ii}})$, and $(\overline{\textnormal{iii}})$ that are equivalent to the conditions (i), (ii) and (iii) used in the proposition. The proof and visualization for the lemma below can be found in Appendix~\ref{app:techlemmas} and Figure~\ref{fig:appA} respectively.

\begin{lemma}\label{lem:intersection dual} The statement of Proposition~\ref{prop:intersection} is equivalent to the following: 
    for any $\alpha \in \RR,\,x\in (\frac{1}{2},1]$, there exists an instance $\bxi\in \bar{\Lambda}$ such that $(\overline{\textnormal{i}})\,\xi_1>\xi_2,\xi_1\ge -\xi_2,\,(\overline{\textnormal{ii}})\, (1-x)\xi_1+x\xi_2=\alpha,\,$ and $(\overline{\textnormal{iii}})\, \bar{d}(\xi_1,(1-\tilde{x})\xi_1+\tilde{x}\xi_2) >\bar{d}(\xi_2,(1-\tilde{x})\xi_1+\tilde{x}\xi_2) $, where $\tilde{x}=(\frac{1}{2}+x)/2$.
\end{lemma}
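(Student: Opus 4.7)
The plan is to establish the three equivalences condition by condition using the change of coordinates $\xi_k=\phi'^{-1}(\mu_k)=\log(\mu_k/(1-\mu_k))$ introduced in the proof preamble. Since $\phi'$ is a bijection from $\RR$ to $(0,1)$, this map is a bijection between $\Lambda$ and $\bar{\Lambda}$, and setting $\alpha=\phi'^{-1}(a)$ turns the quantifier ``for any $a\in(0,1)$'' into ``for any $\alpha\in\RR$'' without loss of generality.

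For $(\textnormal{i})\Leftrightarrow(\overline{\textnormal{i}})$, monotonicity of $\phi'$ immediately gives $\mu_1>\mu_2\Leftrightarrow\xi_1>\xi_2$. For the second half, substituting $\mu_k=e^{\xi_k}/(1+e^{\xi_k})$ into $\mu_1+\mu_2\ge 1$ and clearing denominators reduces (after cancellation) to $e^{\xi_1+\xi_2}\ge 1$, equivalently $\xi_1\ge -\xi_2$. For $(\textnormal{ii})\Leftrightarrow(\overline{\textnormal{ii}})$, substituting $\mu_k/(1-\mu_k)=e^{\xi_k}$ into the closed form~(\ref{eq:lambda}) yields
$$
\lambda(x,\bm)=\frac{e^{(1-x)\xi_1+x\xi_2}}{1+e^{(1-x)\xi_1+x\xi_2}}=\phi'\bigl((1-x)\xi_1+x\xi_2\bigr),
$$
so $\lambda(x,\bm)=a$ if and only if $(1-x)\xi_1+x\xi_2=\phi'^{-1}(a)=\alpha$.

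The delicate step is $(\textnormal{iii})\Leftrightarrow(\overline{\textnormal{iii}})$, which I would handle via the envelope theorem. Since $\lambda(x,\bm)$ is the unique interior minimizer in the definition~(\ref{eq:g}) of $g$, the first-order condition in $\lambda$ kills the indirect dependence and we obtain
$$
\frac{\partial g}{\partial x}(x,\bm)=d(\lambda(x,\bm),\mu_2)-d(\lambda(x,\bm),\mu_1).
$$
By the strict concavity of $g(\cdot,\bm)$ already noted from~(\ref{eq:g alt}), $x^\star(\bm)<\tilde{x}$ is equivalent to $\frac{\partial g}{\partial x}(\tilde{x},\bm)<0$, i.e.\ $d(\lambda(\tilde{x},\bm),\mu_1)>d(\lambda(\tilde{x},\bm),\mu_2)$. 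Applying the identity $d(\phi'(\eta),\phi'(\xi_k))=\bar{d}(\xi_k,\eta)$ recorded in the proof preamble, with $\eta=(1-\tilde{x})\xi_1+\tilde{x}\xi_2$, rearranges this inequality into exactly $(\overline{\textnormal{iii}})$.

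The main obstacle I expect is bookkeeping rather than substance: the Bregman dualization swaps the order of arguments ($d(\mu_1,\mu_2)=\bar d(\xi_2,\xi_1)$), so one must be careful about which argument of $\bar d$ is the ``point'' and which is the ``reference'' when transporting condition~(iii). As a cross-check I would differentiate the closed form~(\ref{eq:g alt}) in $x$ directly and reconcile the resulting sign condition with the envelope-theorem derivation; both routes must produce the same asymmetric inequality between $\bar d(\xi_1,\eta)$ and $\bar d(\xi_2,\eta)$.
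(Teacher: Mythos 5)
Your proof is correct, and it follows a genuinely different route from the paper's for conditions (ii) and (iii). The paper imports Lemma~19 of Degenne~(2023), which gives the closed forms $\bar{\lambda}(x,\bxi)=(1-x)\xi_1+x\xi_2$ and $\bar{x}^\star(\bxi)=\frac{\xi_1-\eta(\bxi)}{\xi_1-\xi_2}$ with $\eta(\bxi)=\phi'^{-1}\bigl(\frac{\phi(\xi_1)-\phi(\xi_2)}{\xi_1-\xi_2}\bigr)$, and then reads the equivalences (ii) $\Leftrightarrow$ ($\overline{\textnormal{ii}}$) and (iii) $\Leftrightarrow$ ($\overline{\textnormal{iii}}$) off these formulas; the Bregman rearrangement needed for ($\overline{\textnormal{iii}}$) is $\eta(\bxi)>(1-\tilde{x})\xi_1+\tilde{x}\xi_2 \Leftrightarrow \bar{d}(\xi_1,\tilde{\xi})>\bar{d}(\xi_2,\tilde{\xi})$. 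You instead obtain the formula for $\lambda(x,\bm)$ by direct substitution of $\mu_k/(1-\mu_k)=e^{\xi_k}$ into the paper's own closed form (\ref{eq:lambda}), and you characterize $x^\star$ via the envelope theorem (giving $\partial_x g = d(\lambda(x,\bm),\mu_2)-d(\lambda(x,\bm),\mu_1)$) together with strict concavity of $g(\cdot,\bm)$, rather than via an external formula for the maximizer. Both routes land on the same inequality; yours has the advantage of being self-contained (no external lemma) and of making the role of the first-order condition transparent. One small bookkeeping point you flagged and handled correctly: the paper's convention is $d(\mu_1,\mu_2)=\bar d(\xi_2,\xi_1)$, i.e.\ the argument order swaps under dualization, so $d(\lambda(\tilde x,\bm),\mu_k)=\bar d(\xi_k,\tilde\xi)$ with $\tilde\xi=(1-\tilde x)\xi_1+\tilde x\xi_2$, which is exactly what your chain of equivalences requires. (Also note the paper's phrase ``strict convexity of $g$'' in the proof of Lemma~\ref{lem:prop g} is a typo for concavity, as established from (\ref{eq:g alt}); your invocation of concavity is the correct one.)
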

 \begin{figure}[h]
\centering
\includegraphics[width=0.7\textwidth]{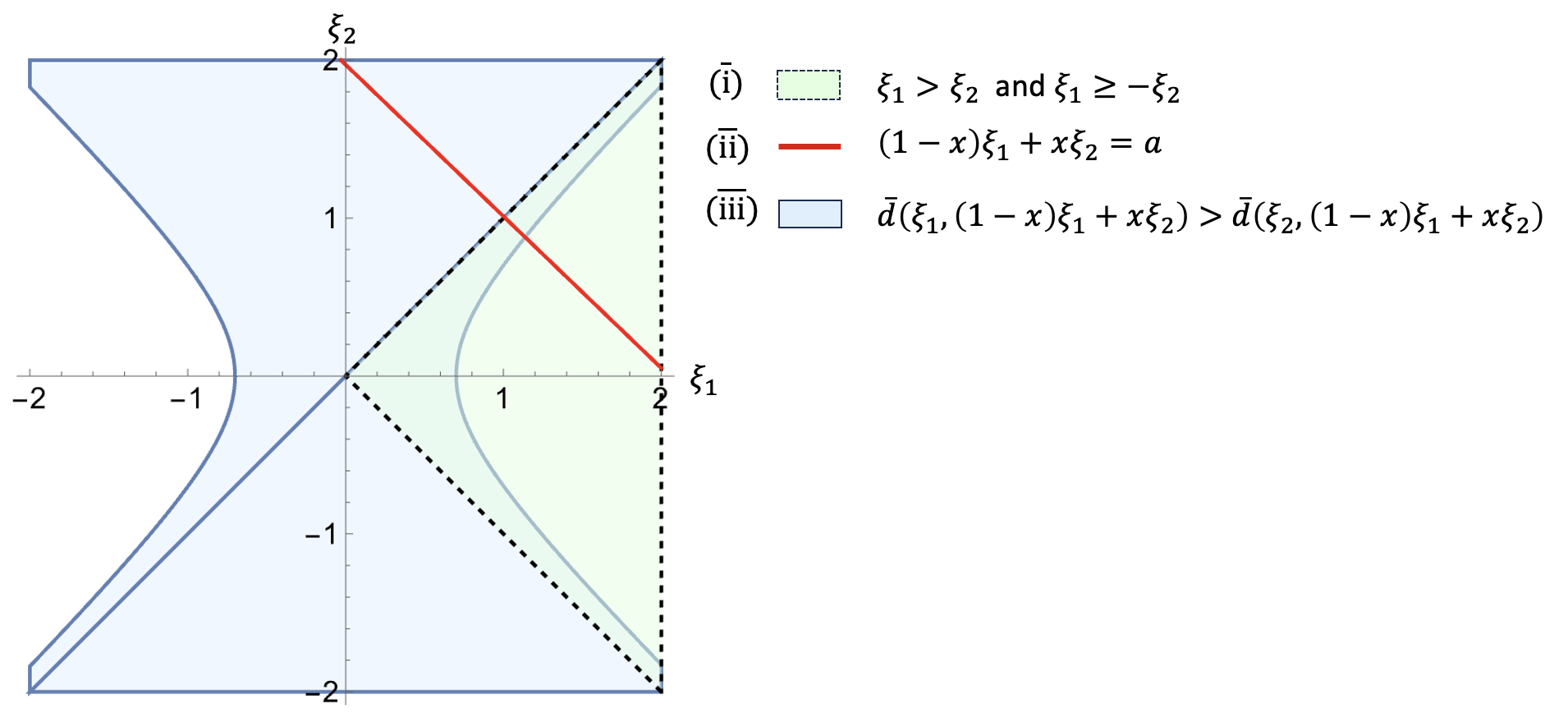}
\caption{
Visualization of Lemma~\ref{lem:intersection dual} with $a=0.55$ and $x=0.51$. The green region indicates $(\overline{\textnormal{i}})\,\xi_1>\xi_2,\xi_1\ge -\xi_2$. The red curve represents $(\overline{\textnormal{ii}})\, (1-x)\xi_1+x\xi_2=\alpha$. The blue region shows $(\overline{\textnormal{iii}})\, \bar{d}(\xi_1,(1-\tilde{x})\xi_1+\tilde{x}\xi_2) >\bar{d}(\xi_2,(1-\tilde{x})\xi_1+\tilde{x}\xi_2) $, where $\tilde{x}=(\frac{1}{2}+x)/2$.}\label{fig:appA}
\label{fig:my_label}
\end{figure}
\noindent
  We consider two cases: (a) when $\alpha < 0$ and (b) when $\alpha \ge 0$.
  
\medskip
  \noindent
  {\bf Case (a)} ($\alpha < 0$). We show that $\bxi=(-\alpha/(2x-1),\alpha/(2x-1))$ satisfies  $(\overline{\text{i}}),(\overline{\text{ii}}),(\overline{\text{iii}})$. As $\alpha<0, x>1/2$, $(\overline{\text{i}})$ follows directly. $(\overline{\text{ii}})$ follows as $(1-x)\xi_1+x\xi_2=(1-2x)\xi_1=\alpha$. As for $(\overline{\text{iii}})$, observe that 
    $$
    \tilde{\xi}=(1-\tilde{x})\xi_1+\tilde{x}\xi_2=  \frac{(2\tilde{x}-1)\alpha}{2x-1}<0.
    $$
    Hence $\phi'(\tilde{
    \xi })=\frac{e^{\tilde{\xi}}}{1+e^{\tilde{\xi}}}<\frac{1}{2}$. As a consequence,
\begin{align*}
    \bar{d}(\xi_1,\tilde{\xi})-\bar{d}(\xi_2,\tilde{\xi})&=\phi(\xi_1)-\phi(\xi_2)-\phi'(\tilde{\xi})(\xi_1-\xi_2)\\
    &=\log \left(\frac{1+e^{\xi_1}}{1+e^{-\xi_1}}\right)-\phi'(\tilde{\xi})2\xi_1\\
    &=\xi_1 \left(1-2\phi'(\tilde{\xi})\right)>0.
\end{align*}

\medskip
\noindent
{\bf Case (b)} ($\alpha \ge 0$). We claim that there is a half disk $\mathcal{D}:=\{\bxi\in \RR^2:\left\|\bxi-(\alpha,\alpha)\right\|_{\infty}<\delta,\,\xi_1>\xi_2\}$ for some $\delta>0$ such that $(\overline{\text{iii}})$ holds whenever $\bxi\in \mathcal{D}$. We then show the intersection of the disk and the line $\mathcal{L}:=\{\bxi\in \RR^2:(1-x)\xi_1+x\xi_2=\alpha\}$ is nonempty and satisfies $(\overline{\text{i}})$ and $(\overline{\text{ii}})$. 

Assume that $\xi_1>\xi_2$. On the one hand, applying Lemma \ref{lem:taylor} with $\alpha=\xi_1$ and $\beta=(1-\tilde{x})\xi_1+\tilde{x}\xi_2$ implies that there is $r_1\in [\xi_2,\xi_1]$ such that
\begin{equation}\label{eq:dbar1}
    \bar{d}(\xi_1,(1-\tilde{x})\xi_1+\tilde{x}\xi_2)=\frac{\phi''(r_1)\tilde{x}^2(\xi_1-\xi_2)^2}{2}.
\end{equation}
On the other hand, applying the same lemma with $\alpha = \xi_2$ and $ \beta = (1-\tilde{x})\xi_1+\tilde{x}\xi_2$, we find $r_2\in [\xi_2,\xi_1]$ such that
\begin{equation}\label{eq:dbar2}
    \bar{d}(\xi_2,(1-\tilde{x})\xi_1+\tilde{x}\xi_2)=\frac{\phi''(r_2)(1-\tilde{x})^2(\xi_1-\xi_2)^2}{2}.
\end{equation}
Since $\phi''(\xi)=\frac{e^\xi}{(1+e^\xi)^2}$ is a continuous function from $\RR$ to $\RR_{>0}$, Lemma \ref{lem:second derivative} implies that there exist $\delta>0$ such that $\min_{r\in [\xi_2,\xi_1]}\phi''(r) \tilde{x}^2>\max_{r\in [\xi_2,\xi_1]}\phi''(r) (1-\tilde{x})^2$ if $\left\|\bxi-(\alpha,\alpha)\right\|_{\infty}<\delta$. Thus by (\ref{eq:dbar1}) and (\ref{eq:dbar2}), we conclude that if $\bxi\in \mathcal{D}:=\{\bxi\in \RR^2:\left\|\bxi-(\alpha,\alpha)\right\|_{\infty}<\delta,\xi_1>\xi_2\}$, then $(\overline{\text{iii}})$ holds. Obviously, $\mathcal{D}\cap\mathcal{L}=\cup_{s\in (0,\delta/x)}\{(\alpha+xs,\alpha-(1-x)s) \}\neq \emptyset$. Consider $\bxi=(\alpha+xs,\alpha-(1-x)s)$ with some $s\in (0,\delta/x)$, we have $\xi_1>\alpha>\xi_2$ and $\xi_1+\xi_2=2\alpha+(2x-1)s\ge 0$, hence $\bxi$ satisfies~$(\overline{\text{i}})$. $(\overline{\text{ii}})$ holds directly by definition of ${\cal L}$.

\end{proof}

\subsection{Proof for the Closed-Form Expressions of $g(x,\bmu)$ and $\lambda(x,\bmu)$}
Here, we aim to present the proof of the following closed-form expressions for $g(x,\bmu)$ and $\lambda(x,\bmu)$:
\begin{lemma}\label{lem:closedform_glynn}
 For any $\bmu \in \Lambda$, for any $x\in (0,1)$, the following equations hold.
    \begin{align}
\lambda(x,\bmu) &=\frac{(\frac{\mu_1}{1-\mu_1})^{1-x}(\frac{\mu_2}{1-\mu_2})^x}{1+(\frac{\mu_1}{1-\mu_1})^{1-x}(\frac{\mu_2}{1-\mu_2})^x} ,\tag{\ref{eq:lambda}} \\
g(x,\bmu) & =-\log ((1-\mu_1)^{1-x}(1-\mu_2)^x+\mu_1^{1-x}\mu_2^x).\tag{\ref{eq:g alt}}
\end{align}
\end{lemma}
\begin{proof}
Again, we denote $\xi_1=\phi'^{-1}(\mu_1) = \log(\frac{\mu_1}{1 - \mu_1}),\,\xi_2=\phi'^{-1}(\mu_2)=\log(\frac{\mu_2}{1 - \mu_2})$ as in the proof of Proposition~\ref{prop:intersection}. The minimization problem 
\begin{equation}\label{eq:closeform-1}
    \min_{\lambda\in [0,1]} \big((1-x)d(\lambda,\mu_1)+xd(\lambda,\mu_2)\big)
\end{equation}
can be written as:
$$
 \min_{\lambda\in [0,1]}\big((1-x)\bar{d}(\xi_1,\phi'^{-1}(\lambda))+x\bar{d}(\xi_2,\phi'^{-1}(\lambda))\big),
$$
or equivalently,
\begin{equation}
\min_{\eta\in \RR}\big((1-x)\bar{d}(\xi_1,\eta) +x\bar{d}(\xi_2,\eta)\big).
\end{equation}
One can observe that 
$$
\frac{\partial}{\partial\eta} \big((1-x)\bar{d}(\xi_1,\eta) +x\bar{d}(\xi_2,\eta)\big)=\big(\eta-(1-x)\xi_1-x\xi_2\big)\phi''(\eta)
$$
has a unique root at the point $\eta=\eta(x,\bxi)=(1-x)\xi_1+x\xi_2$. As $\phi'$ is an invertible mapping, we conclude that $\phi'(\eta (x,\bxi))$ is the unique minimizer to the minimization problem (\ref{eq:closeform-1}). Therefore,
\begin{align*}
    \lambda(x,\bmu)&=\phi' ((1-x)\xi_1+x\xi_2)\\
    &=\phi'\left(\log\left(\left(\frac{\mu_1}{1-\mu_1}\right)^{1-x}\left(\frac{\mu_2}{1-\mu_2}\right)^x\right)\right)\\
    &=\frac{(\frac{\mu_1}{1-\mu_1})^{1-x}(\frac{\mu_2}{1-\mu_2})^x}{1+(\frac{\mu_1}{1-\mu_1})^{1-x}(\frac{\mu_2}{1-\mu_2})^x}.
\end{align*}
Finally, (\ref{eq:g alt}) can be obtained as follows.
\begin{align*}
    g(x,\bmu)& = (1-x)\bar{d}(\xi_1,\eta(x,\bxi))+x\bar{d}(\xi_2,\eta(x,\bxi))
    \\
    & = (1-x)\left(\phi(\xi_1) - \phi(\eta(x, \bxi)) - (\xi_1 - \eta(x, \bxi))\phi'(\eta(x, \bxi))\right) + x\left(\phi(\xi_2) - \phi(\eta(x, \bxi)) - (\xi_2 - \eta(x, \bxi))\phi'(\eta(x, \bxi))\right)
    \\
    & = (1-x)\phi(\xi_1) + x\phi(\xi_2) - \phi(\eta(x, \bxi)) -((1-x)\xi_1 + x\xi_2 - \eta(x, \bxi))\phi'(\eta(x,\bxi))
    \\
    & = (1-x)\phi(\xi_1) + x\phi(\xi_2) - \phi(\eta(x, \bxi))
    \\
    & = (1-x)\phi\left(\log(\frac{\mu_1}{1-\mu_1})\right) + x\phi\left(\log(\frac{\mu_2}{1-\mu_2})\right) - \phi\left((1-x)\log(\frac{\mu_1}{1-\mu_1}) + x\log(\frac{\mu_2}{1-\mu_2})\right)
    \\
    &= -\log ((1-\mu_1)^{1-x}(1-\mu_2)^x+\mu_1^{1-x}\mu_2^x).
\end{align*}
\end{proof}

\subsection{Proof for the Strong Concavity of $g(x,\bmu)$}

\begin{lemma}\label{lem:concave}
    For any $\bmu \in \Lambda$, for any $x\in (0,1)$,  $\frac{\partial^2}{\partial^2 x}g(x,\bmu)<0$.
\end{lemma}
\begin{proof}
    As shown in the proof of Lemma~\ref{lem:closedform_glynn}, 
    \begin{equation}\label{eq:concave-1}
         g(x,\bmu)=(1-x)\phi(\xi_1) + x\phi(\xi_2) - \phi(\eta(x, \bxi)),
    \end{equation}
    where $\xi_1=\phi'^{-1}(\mu_1),\,\xi_2=\phi'^{-1}(\mu_2)$, and $\eta(x,\bxi)=(1-x)\xi_1+x\xi_2$. %
    We differentiate (\ref{eq:concave-1}) with respect to $x$: 
    \begin{align*}
        \frac{\partial}{\partial x}g(x,\bmu)&=\phi(\xi_2)-\phi(\xi_1)+(\xi_1-\xi_2)\phi'(\eta(x,\bxi)),\\
        \frac{\partial^2}{\partial^2 x}g(x,\bmu)&=-(\xi_1-\xi_2)^2\phi''(\eta (x,\bxi))<0.
    \end{align*}
\end{proof}

\subsection{Proof of Proposition~\ref{prop:reflect}}\label{app:reflect}
\begin{repproposition}{prop:reflect}
  Denote $\bar{\bmu}=(1-\mu_2,1-\mu_1).$
    For any $x\in (0,1)$, for any $\bmu\in \Lambda$, $g(1-x,\bar{\bmu})=g(x,\bmu)$ and $\lambda (1-x,\bar{\bmu})=1-\lambda (x,\bmu)$.
\end{repproposition}
\begin{proof}
From \eqref{eq:g alt}, we obtain
\begin{align*}
    g(1-x,\bar{\bmu})   &=-\log ((1-\mu_1)^{1-x}(1-\mu_2)^x+\mu_1^{1-x}\mu_2^x)
    \\
    & =g(x,\bmu).
\end{align*}

Lastly, by \eqref{eq:lambda}, we have 
\begin{align*}
\lambda(1-x,\bar{\bmu})    &=\frac{(1-\mu_1)^{1-x}(1-\mu_2)^x}{(1-\mu_1)^{1-x}(1-\mu_2)^x+\mu_1^{1-x}\mu_2^{x} }\\
    &=1-\frac{\mu_1^{1-x}\mu_2^x}{(1-\mu_1)^{1-x}(1-\mu_2)^x+\mu_1^{1-x}\mu_2^{x} }
    \\
    & =1-\lambda(x,\bmu).
\end{align*}
This concludes the proof.

\end{proof}

\newpage
\section{Proof of Proposition \ref{prop:asymmetry}}\label{app:asym}

\begin{repproposition}{prop:asymmetry}
    Suppose $\mu_1>\mu_2$ and $\mu_1+\mu_2\ge 1$. For any positive $\delta\le \min \{x^\star(\bmu),1-x^\star(\bmu)\},$
    $$
    g(x^\star(\bmu)-\delta,\bmu)\ge g(x^\star(\bmu)+\delta,\bmu).
    $$
\end{repproposition}

\begin{proof}
For simplicity, let $x^\star=x^\star(\bmu)$, and define
\begin{equation}\label{eq:gbar}
\hat{g}(x,\bmu)=\exp(-g(x,\bmu))=(1-\mu_1)^{1-x}(1-\mu_2)^x+\mu_1^{1-x}\mu_2^x.
\end{equation}
Consider the function $f(\delta) := \hat{g}(x^\star + \delta, \bmu) - \hat{g}(x^\star - \delta, \bmu)$. In order to demonstrate that $f(\delta) \ge 0$ using the mean value theorem, we show that the first-order derivative, $f'(\delta)$, is non-negative for all $\delta \in (0, \min\{x^\star(\bmu), 1 - x^\star(\bmu)\})$.

\noindent
By definition of $x^\star$, we have $\frac{\partial \hat{g}(x^\star,\bmu)}{\partial x}=0$, i.e.,
$$
(1-\mu_1)^{1-x^\star}(1-\mu_2)^{x^\star} \log \left(\frac{1-\mu_2}{1-\mu_1}\right)+\mu_1^{1-x^\star}\mu_2^{x^\star}\log \left(\frac{\mu_2}{\mu_1}\right)=0.
$$
This implies that 
\begin{equation}\label{eq:M}
\frac{(1-\mu_1)^{1-x^\star}(1-\mu_2)^{x^\star}}{\log \left(\frac{\mu_1}{\mu_2}\right)}= \frac{\mu_1^{1-x^\star}\mu_2^{x^\star}}{\log\left(\frac{1-\mu_2}{1-\mu_1}\right) }.  
\end{equation}
Let $M$ be the value of (\ref{eq:M}). $M\ge 0$ as $\mu_1>\mu_2$. Recalling the definition of $f(\delta)$ and (\ref{eq:gbar}), we get
\begin{equation*}
f(\delta)=    M\log\left(\frac{\mu_1}{\mu_2}\right) \left[ \left(\frac{1-\mu_2}{1-\mu_1}\right)^\delta-\left(\frac{1-\mu_2}{1-\mu_1}\right)^{-\delta} \right]-M\log\left(\frac{1-\mu_2}{1-\mu_1}\right)\left[ \left(\frac{\mu_1}{\mu_2}\right)^\delta-\left( \frac{\mu_1}{\mu_2}\right)^{-\delta} \right]
\end{equation*}
and 
$$
f'(\delta)=   M\log\left(\frac{\mu_1}{\mu_2}\right)\log\left(\frac{1-\mu_2}{1-\mu_1}\right) \left[ \left(\frac{1-\mu_2}{1-\mu_1}\right)^\delta+\left(\frac{1-\mu_2}{1-\mu_1}\right)^{-\delta}-\left(\frac{\mu_1}{\mu_2}\right)^\delta-\left(\frac{\mu_1}{\mu_2}\right)^{-\delta}\right].
$$
Observe that the first three factors on the r.h.s. of the above expression are all positive, since $\mu_1 > \mu_2$. For the last factor, we first have $\frac{1 -\mu_2}{1- \mu_1}\ge \frac{\mu_1}{\mu_2}$, according to Lemma~\ref{lem:entropy} in Appendix~\ref{app:techlemmas}. Additionally, note that the mapping $z\mapsto z^\delta+z^{-\delta}$ is increasing when $z>0$. From these observations, we can conclude that $f'(\delta)\ge 0$, which completes the proof.
\end{proof}

\newpage

\section{Technical Lemmas}\label{app:techlemmas}

\begin{replemma}{lem:negation_stable}
   Let $a\in (0,1)$. Assume that the statement \textnormal{(B)} of Definition~\ref{def:stable} does not hold. Then for any $\{\bp^{(n)}\}_{n=1}^\infty\subset \{\bp\in \Lambda:\pi_1<\pi_2\}$ such that $\bp^{(n)}\xrightarrow{n\rightarrow\infty} (a,a)$, there exists a value $x \in [0, 1]$ and increasing subsequences of integers $\{n_{m}\}_{m=1}^\infty, \{T_{m, \ell}\}_{\ell=1}^\infty \subset \mathbb{N}$ such that 
    $$
\lim_{m \to \infty}\lim_{\ell \to \infty}  \EE_{\bp^{(n_m)}}[\omega_2(T_{m,\ell})] =x \neq \frac{1}{2}.
$$
\end{replemma}

\begin{proof}
      First, Lemma~\ref{lem:equivalent} shows that the equations
$$
    \lim_{n\rightarrow \infty} \liminf_{T\rightarrow \infty}\EE_{\bp^{(n)}}[\omega_2(T)]  =  
    \lim_{n\rightarrow \infty} \limsup_{T\rightarrow \infty}\EE_{\bp^{(n)}}[\omega_2(T)] =
    \frac{1}{2}
    $$
    in the statement \textnormal{(B)} of Definition~\ref{def:stable} is equivalent to the statement: 
    $\forall \varepsilon>0$, $\exists N\in \mathbb{N}$ such that $\forall n \ge N$, $\exists T_{n} \in \mathbb{N}$, $\forall T \ge T_{n}$,
    $$
\left| \EE_{\bp^{(n)}}[\omega_2(T)] - \frac{1}{2}\right|<\varepsilon.
    $$
Its negation is: there exists $\varepsilon \in (0, \frac{1}{2}]$ such that  
 $\forall N \in \mathbb{N}$, $\exists \bar{n} \ge N$ such that $\forall T \in \mathbb{N}$, $\exists \bar{T}_{\bar{n}} \ge T$ such that 
     \begin{equation}\label{eq:BW1}
         \left| \EE_{\bp^{(\bar{n})}}[\omega_2(\bar{T}_{\bar{n}})] - \frac{1}{2}\right|\ge\varepsilon.
     \end{equation}
    In the above statement, let select $N$ arbitrarily, and fix a corresponding $\bar{n}$. If we take $T=1$, we can find $\bar{T}_{\bar{n}}= \bar{T}_{\bar{n}, 1}\ge 1$ satisfying (\ref{eq:BW1}). Next, we take $T=\bar{T}_{\bar{n}, 1}+1$, we can find $\bar{T}_{\bar{n}}=\bar{T}_{\bar{n}, 2}> \bar{T}_{\bar{n}, 1}$ satisfying (\ref{eq:BW1}). By repeating this operation, we can construct an increasing sequence of integers $\{\bar{T}_{\bar{n},\ell}\}_{\ell=1}^\infty$ such that: 
     $$
\forall \ell \in \NN, \quad \left| \EE_{\bp^{(\bar{n})}}[\omega_2(\bar{T}_{\bar{n} ,\ell})] - \frac{1}{2}\right|\ge\varepsilon.
    $$
We have now proved that there exists $\varepsilon \in (0, \frac{1}{2}]$ such that  
 $\forall N \in \mathbb{N}$, $\exists \bar{n} \ge N$ such that  
    \begin{equation}\label{eq:BW2}
        \exists \{ \bar{T}_{\bar{n} ,\ell}\}_{\ell=1}^\infty : \ \ \forall \ell \in \NN, \bar{T}_{\bar{n} ,\ell}< \bar{T}_{\bar{n} ,\ell+1}\hbox{ and } \left| \EE_{\bp^{(\bar{n})}}[\omega_2(\bar{T}_{\bar{n} ,\ell})] - \frac{1}{2}\right|\ge\varepsilon.
    \end{equation}
Again, in the above statement, if we take $N=1$, we can find $\bar{n}_1 \ge 1$ satisfying (\ref{eq:BW2}).  
Next, if we take $N=\bar{n}_1+1$, we can find $\bar{n} =\bar{n}_2> \bar{n}_1$ satisfying (\ref{eq:BW2}). By repeating this operation, we can construct an increasing sequence of integers $\{\bar{n}_m\}_{m=1}^\infty$ satisfying (\ref{eq:BW2}). In summary, we have found an increasing sequence of integers $\{\bar{n}_m\}_{m=1}^\infty$ and for all $m$, an other increasing sequence of integers $\{ \bar{T}_{\bar{n}_m ,\ell}\}_{\ell=1}^\infty$ such that:
\begin{align}\label{eq:subsecs_dev}
\forall (m,\ell) \in \NN^2, \quad \left| \EE_{\bp^{(\bar{n}_m)}}[\omega_2(\bar{T}_{\bar{n}_m ,\ell})] - \frac{1}{2}\right|\ge\varepsilon.
\end{align}
From the Bolzano–Weierstrass theorem, a bounded sequence always contains a convergent subsequence. Thus, for each $m \in \mathbb{N}$, one can always find 
 $\{T_{\bar{n}_m,\ell}\}_{\ell=1}^\infty\subset \{\bar{T}_{\bar{n}_m,\ell}\}_{\ell=1}^\infty$ such that $\{ \EE_{\bp^{(\bar{n}_m)}}[\omega_2({T}_{\bar{n}_m,\ell})]\}_{\ell=1}^\infty$ converges. We denote by $x_{\bar{n}_m}$ its limit, i.e., $\lim_{\ell \to \infty} \EE_{\bp^{(\bar{n}_m)}}[\omega_2(T_{\bar{n}_m,\ell})] = x_{\bar{n}_m}$. Note that from \eqref{eq:subsecs_dev}, with some $\varepsilon \in (0, \frac{1}{2}]$,
\begin{align}\label{neq:subsec_converg}
\forall m \in \mathbb{N}, \quad \left|x_{\bar{n}_m} - \frac{1}{2}\right| \ge \varepsilon.
\end{align}
Futher observe that of course, $x_{\bar{n}_m}\in [0,1]$ for all $m$. Using the Bolzano–Weierstrass theorem again, there exists $ \{n_{m}\}_{m =1}^\infty \subset \{\bar{n}_{m}\}_{m=1}^\infty$ such that $x_{n_m}$ converges to $x \in [0,1]$, i.e, $\lim_{m\to\infty}x_{n_m} = x$. From \eqref{neq:subsec_converg}, we remark that $|x - \frac{1}{2}| \ge \varepsilon$. The constructed $x$, $\{n_m\}_{m=1}^\infty$, and $\{T_{m, \ell}\}_{\ell=1}^\infty$ satisfy the desired claim, which concludes the proof.
 
\end{proof}
\begin{lemma}\label{lem:equivalent}
Let $\{\psi(n,T)\}_{n,T= 1}^\infty$ be a double sequence of real numbers. The following two statements are equivalent:
\begin{align*}
(I) \qquad& \lim_{n\to \infty}\liminf_{T\to \infty} \psi(n,T)=\lim_{n\to \infty}\limsup_{T\to \infty} \psi(n,T)=\frac{1}{2}\\
(II) \qquad & \forall\varepsilon>0, \exists N\in \NN: \forall n\ge N, \left( \exists T_n\in \NN, \forall T\ge T_n,\ \ 
    \left|\psi(n,T)-\frac{1}{2}\right|<\varepsilon \right).
\end{align*}    
\end{lemma}
\begin{proof}
We first prove that {$(II)\Rightarrow (I)$.} We rewrite $(II)$ as follows:  $\forall\varepsilon>0$, $\exists N\in \NN$ such that $\forall n\ge N$, $\exists T_n\in \NN$, $\forall T\ge T_n$,
    \begin{align}\label{eq:doublelimit_low_up}
    \frac{1}{2} - \varepsilon< \psi(n,T) <\frac{1}{2} + \varepsilon.
    \end{align}
    By taking $\limsup_{T\to\infty}$ and  $\liminf_{T\to\infty}$ of \eqref{eq:doublelimit_low_up}, we obtain the following statement: $\forall\varepsilon>0$, $\exists N\in \NN$  such that $\forall n\ge N$, 
    $$
   \frac{1}{2}-\varepsilon \le \liminf_{T\to \infty} \psi(n,T)\le \limsup_{T\to \infty} \psi(n,T)\le \frac{1}{2}+\varepsilon. 
    $$
    Therefore, we get $(I)$.
     
    \medskip
    \noindent
   Next we prove that $(I)\Rightarrow (II)$. Observe that $\lim_{n\to \infty}\limsup_{T\to \infty} \psi(n,T)=1/2$ implies for any $\eta_1>0$, there exists $\overline{N}\in \NN$, such that $\forall n\ge \overline{N}$,  
\begin{equation}\label{neq:lem81}
         \limsup_{T\to \infty} \psi(n,T)\le \frac{1}{2}+\eta_1.
    \end{equation}
    As a consequence of (\ref{neq:lem81}), for any $\eta_1 >0, \exists \overline{N} \in \mathbb{N}$ such that $\forall n\ge \overline{N}$, for any $\eta_2>0$, $\exists \overline{T}_n\in \NN$ such that $\forall T\ge \overline{T}_n$,  
\begin{equation}\label{neq:lem82}
        \psi(n,T)\le \frac{1}{2}+\eta_1+\eta_2.
    \end{equation}
    Similarly, $\lim_{n\to \infty}\liminf_{T\to \infty} \psi(n,T)=1/2$ implies:  for any $\eta_1 >0, \exists \underline{N} \in \mathbb{N}$ such that $\forall n\ge \underline{N}$, for any $\eta_2>0$, $\exists \underline{T}_n\in \NN$ such that $\forall T\ge \underline{T}_n$,  
      \begin{equation}\label{neq:lem83}
        \psi(n,T)\ge \frac{1}{2}-\eta_1-\eta_2.
    \end{equation}
    Combing (\ref{neq:lem82}) and (\ref{neq:lem83}), for any $\eta_1>0$,   $\exists N(=\max\{\overline{N},\underline{N}\})$ such that $\forall n\ge N$, for any $\eta_2>0$, $\exists T_n(=\max\{\overline{T}_n,\underline{T}_n\})$ such that $\forall T\ge T_n$, $\frac{1}{2}-\eta_1-\eta_2\le \psi(n,T)\le \frac{1}{2}+\eta_1+\eta_2$. 
    By taking $\eta_1 = \varepsilon/2$ and $\eta_2 =\varepsilon/2$, we obtain the statement: for any $\varepsilon>0$, $\exists N\in \mathbb{N}$ such that $\forall n\ge N$, $\exists T_n\in \mathbb{N}$ such that $\forall T\ge T_n$, $\frac{1}{2}-\varepsilon\le \psi(n,T)\le \frac{1}{2}+ \varepsilon$, which completes the proof.

\end{proof}

\begin{replemma}{lem:intersection dual} The statement of Proposition~\ref{prop:intersection} is equivalent to the following: 
    for any $\alpha \in \RR,\,x\in (\frac{1}{2},1]$, there exists an instance $\bxi\in \bar{\Lambda}$ such that $(\overline{\textnormal{i}})\,\xi_1>\xi_2,\xi_1\ge -\xi_2,\,(\overline{\textnormal{ii}})\, (1-x)\xi_1+x\xi_2=\alpha,\,$ and $(\overline{\textnormal{iii}})\, \bar{d}(\xi_1,(1-\tilde{x})\xi_1+\tilde{x}\xi_2) >\bar{d}(\xi_2,(1-\tilde{x})\xi_1+\tilde{x}\xi_2) $, where $\tilde{x}=(\frac{1}{2}+x)/2$.
\end{replemma}

\begin{proof}
We show the equivalence of $({\text{i}})$ to $(\overline{\text{i}})$, $({\text{ii}})$ to $(\overline{\text{ii}})$, and $({\text{iii}})$ to $(\overline{\text{iii}})$ in the following. 

\medskip
\noindent
{\bf Equivalence of (i) to $(\overline{\text{i}})$.} Note that $\mu_1>\mu_2$ can be rewritten as $\phi'(\xi_1) > \phi '(\xi_2)$. As $\phi'$ is a strictly increasing function, it holds if and only if $\xi_1>\xi_2$. As for $\mu_1+\mu_2\ge 1$, its equivalent statement is
$$
1\le \phi'(\xi_1)+\phi'(\xi_2)=\frac{e^{\xi_1}+e^{\xi_2}+2e^{\xi_1+\xi_2}}{(1+e^{\xi_1})(1+e^{\xi_2})}.
$$
By rearranging the above inequality, we obtain $\xi_1>-\xi_2$.

\medskip
\noindent
{\bf Equivalence of (ii) to  $(\overline{\text{ii}})$.}
 Introduce the following notation.
\begin{align*}
    \bar{g}(x,\bxi)&\coloneqq\inf_{\bar{\lambda}\in\RR} (1-x)\bar{d}(\xi_1,\bar{\lambda})+x\bar{d}(\xi_2,\bar{\lambda})=g(x,\phi'(\xi_1),\phi'(\xi_2)),\\
    \bar{\lambda}(x,\bxi)&\coloneqq\argmin_{\bar{\lambda}\in \RR} (1-x)\bar{d}(\xi_1,\bar{\lambda})+x\bar{d}(\xi_2,\bar{\lambda})= \phi'^{-1}(\lambda (x,\phi'(\xi_1),\phi'(\xi_2))),\\
\hbox{and }\bar{x}^\star(\bxi)&\coloneqq\argmax_{x\in (0,1)}\bar{g}(x,\bxi)=x^{\star}(\phi'(\xi_1),\phi'(\xi_2) ). 
\end{align*}
We use the following lemma from \citet{degenne2023existence}.

\begin{lemma}[Lemma 19 in \cite{degenne2023existence}]\label{lem:deg}
For any $\bxi\in \bar{\Lambda},\,x\in [0,1]$, $\bar{\lambda}(x,\bxi)=(1-x)\xi_1+x\xi_2$ and $\bar{x}^\star(\bxi)=\frac{\xi_1-\eta(\bxi)}{\xi_1-\xi_2}$, where $\eta(\bxi)=\phi'^{-1}(\frac{\phi(\xi_1)-\phi(\xi_2)}{\xi_1-\xi_2})$.
\end{lemma}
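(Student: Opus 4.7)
The plan is to establish both identities by direct first-order optimality analysis, exploiting strict convexity of the log-partition function $\phi$ inherited from the Bernoulli exponential family. The key computational ingredient is the derivative of the Bregman divergence in its second argument: differentiating $\bar{d}(\alpha,\beta) = \phi(\alpha) - \phi(\beta) - (\alpha-\beta)\phi'(\beta)$ produces
\begin{equation*}
\frac{\partial}{\partial\beta}\bar{d}(\alpha,\beta) = -(\alpha-\beta)\phi''(\beta),
\end{equation*}
which will be invoked in both parts.

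For the first identity, I would set $F(\bar{\lambda}) := (1-x)\bar{d}(\xi_1,\bar{\lambda}) + x\bar{d}(\xi_2,\bar{\lambda})$ and compute
\begin{equation*}
F'(\bar{\lambda}) = -\phi''(\bar{\lambda})\bigl[(1-x)(\xi_1-\bar{\lambda}) + x(\xi_2-\bar{\lambda})\bigr].
\end{equation*}
Since $\phi''>0$, setting $F'(\bar{\lambda})=0$ yields the unique stationary point $\bar{\lambda} = (1-x)\xi_1 + x\xi_2$. The second derivative $F''(\bar{\lambda})$ at this point simplifies to $\phi''(\bar{\lambda})>0$ (the $\phi'''$ contribution vanishes because the weighted residual $(1-x)(\xi_1-\bar\lambda)+x(\xi_2-\bar\lambda)$ is zero there), so the stationary point is a strict local minimum; being the only critical point of a smooth, nonnegative function (a convex combination of KL divergences), it is the global minimizer, confirming $\bar{\lambda}(x,\bxi) = (1-x)\xi_1 + x\xi_2$.

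For the second identity, I would substitute this minimizer back into $F$. The terms proportional to $\phi'(\bar{\lambda})$ cancel precisely by the first-order condition just established, leaving
\begin{equation*}
\bar{g}(x,\bxi) = (1-x)\phi(\xi_1) + x\phi(\xi_2) - \phi((1-x)\xi_1 + x\xi_2),
\end{equation*}
which is strictly concave in $x$ because its second derivative equals $-(\xi_1-\xi_2)^2\phi''(\cdot)<0$. Differentiating and setting to zero gives $\phi'((1-x)\xi_1 + x\xi_2) = \frac{\phi(\xi_1)-\phi(\xi_2)}{\xi_1-\xi_2}$; inverting $\phi'$ yields $(1-x)\xi_1 + x\xi_2 = \eta(\bxi)$, and solving for $x$ produces $\bar{x}^\star(\bxi) = \frac{\xi_1-\eta(\bxi)}{\xi_1-\xi_2}$. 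A mean-value-theorem argument applied to $\phi$ on the interval spanned by $\xi_1$ and $\xi_2$ shows that $\eta(\bxi)$ lies strictly between $\xi_1$ and $\xi_2$, so $\bar{x}^\star(\bxi) \in (0,1)$ as required. The whole argument is a routine calculus exercise on strictly convex functions of one variable, so there is no substantive obstacle; the only point of minor care is keeping the sign conventions in the Bregman divergence consistent throughout.
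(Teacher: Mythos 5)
Your proof is correct, and worth noting is that the paper itself does not prove this lemma: it is cited verbatim as Lemma 19 of \cite{degenne2023existence}, so there is no in-paper argument to compare against. Your derivation is the natural one and fills that gap self-containedly. Both identities follow from the single computation $\frac{\partial}{\partial\beta}\bar{d}(\alpha,\beta) = -(\alpha-\beta)\phi''(\beta)$, as you exploit. One small simplification: you do not actually need the second-derivative check to certify the global minimum in part one, since $F'(\bar\lambda) = -\phi''(\bar\lambda)\bigl[(1-x)\xi_1 + x\xi_2 - \bar\lambda\bigr]$ together with $\phi'' > 0$ shows directly that $F' < 0$ for $\bar\lambda < (1-x)\xi_1 + x\xi_2$ and $F' > 0$ beyond, so $F$ is strictly decreasing then strictly increasing on $\RR$ and the unique stationary point is automatically the global minimizer; this sidesteps the (correct but superfluous) remark about the $\phi'''$ term. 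The rest, including the cancellation giving $\bar{g}(x,\bxi) = (1-x)\phi(\xi_1) + x\phi(\xi_2) - \phi((1-x)\xi_1 + x\xi_2)$, the strict concavity via $-(\xi_1-\xi_2)^2\phi''(\cdot)<0$, and the mean-value argument placing $\eta(\bxi)$ strictly between $\xi_1$ and $\xi_2$ so that $\bar{x}^\star(\bxi)\in(0,1)$, is all sound.
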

\noindent
The equivalence of (ii) to  $(\overline{\text{ii}})$ directly follows from Lemma \ref{lem:deg}.

\medskip
\noindent
{\bf Equivalence of (iii) to $(\overline{\text{iii}})$.}
From Lemma \ref{lem:deg}, $ \bar{x}^\star(\bxi) = x^{\star}(\phi'(\xi_1),\phi'(\xi_2) ) < (\frac{1}{2} + x)/2 = \tilde{x}$ for a given $x\in (\frac{1}{2},1]$ is equivalent to 
$$
\phi'^{-1} \left(\frac{\phi(\xi_1)-\phi(\xi_2)}{\xi_1-\xi_2} \right)=\eta(\bxi)>(1-\tilde{x})\xi_1+\tilde{x}\xi_2.
$$
By denoting $\tilde{\xi}=(1-\tilde{x})\xi_1+\tilde{x}\xi_2$, we then arrange it as:
$$
\phi(\xi_1)-\phi(\tilde{\xi})-\phi'(\tilde{\xi})(\xi_1-\tilde{\xi})>\phi(\xi_2)-\phi(\tilde{\xi})-\phi'(\tilde{\xi})(\xi_2-\tilde{\xi}),
$$
that is, $\bar{d}(\xi_1,\tilde{\xi})>\bar{d}(\xi_2,\tilde{\xi})$. This concludes the proof of Lemma~\ref{lem:intersection dual}.

\end{proof}

\begin{lemma}\label{lem:taylor}
    Given $\alpha,\beta\in \RR$, there exists $\min\{\alpha,\beta\}\le r\le \max\{\alpha,\beta\}$ such that
    $$
    \bar{d}(\alpha,\beta)=\frac{(\alpha-\beta)^2\phi''(r)}{2}.
    $$
\end{lemma}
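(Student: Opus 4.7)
The plan is to recognize that the quantity $\bar{d}(\alpha,\beta)$ is by definition the second-order Taylor remainder of $\phi$ at the base point $\beta$, and then apply Taylor's theorem with the Lagrange form of the remainder. Recall from the paper that $\bar{d}(\alpha,\beta)=\phi(\alpha)-\phi(\beta)-(\alpha-\beta)\phi'(\beta)$, so the claim is equivalent to asserting the existence of some $r$ between $\alpha$ and $\beta$ with
$$
\phi(\alpha)=\phi(\beta)+\phi'(\beta)(\alpha-\beta)+\frac{(\alpha-\beta)^2\phi''(r)}{2}.
$$

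First I would check the smoothness hypothesis: $\phi(\xi)=\log(1+e^\xi)$ is infinitely differentiable on $\mathbb{R}$, so it is in particular $C^2$ on the closed interval $[\min\{\alpha,\beta\},\max\{\alpha,\beta\}]$. Next I would invoke Taylor's theorem with the Lagrange remainder of order two, applied to $\phi$ expanded around $\beta$ and evaluated at $\alpha$: this produces some $r$ strictly between $\alpha$ and $\beta$ satisfying the displayed identity above (with the convention that $r=\alpha=\beta$ works trivially when $\alpha=\beta$, since then both sides of the target equation vanish). Rearranging that identity yields exactly $\bar{d}(\alpha,\beta)=\frac{(\alpha-\beta)^2\phi''(r)}{2}$, completing the argument.

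There is essentially no obstacle here: the lemma is a direct restatement of the Lagrange remainder formula once one recognizes the Bregman divergence as the truncation error of the first-order Taylor expansion. The only conceptual point worth flagging (for completeness, not difficulty) is that the conclusion is symmetric in $\alpha$ and $\beta$ for the location of $r$ (i.e., $r$ lies in the closed interval between them regardless of their ordering), which is handled automatically by writing $\min\{\alpha,\beta\}\le r\le \max\{\alpha,\beta\}$ as the paper does.
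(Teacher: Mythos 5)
Your proof is correct and follows essentially the same route as the paper's: both are applications of Taylor's theorem with Lagrange remainder. The only cosmetic difference is that you expand $\phi$ directly around $\beta$ and rearrange, whereas the paper expands $\bar{d}(\cdot,\beta)$ around $\beta$ and observes that the zeroth- and first-order terms vanish; these are the same computation.
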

\begin{proof}
    The first and second derivatives of $\bar{d}$ are:
    $$
    \frac{\partial}{\partial \alpha}\bar{d}(\alpha,\beta)=\phi'(\alpha)-\phi'(\beta)\hbox{ and } \frac{\partial^2}{\partial^2 \alpha}\bar{d}(\alpha,\beta)=\phi''(\alpha).
    $$
    Using Taylor's equality, we have $r\in [\min\{\alpha,\beta\}, \max\{\alpha,\beta\}]$ such that
    $$
    \bar{d}(\alpha,\beta)=\bar{d}(\beta,\beta)+(\alpha-\beta) \frac{\partial}{\partial \alpha}\bar{d}(\beta,\beta) +\frac{(\alpha-\beta)^2}{2}\frac{\partial^2}{\partial^2 \alpha}\bar{d}(r,\beta) =\frac{(\alpha-\beta)^2\phi''(r)}{2}.
    $$
\end{proof}

\medskip
\noindent

\begin{lemma}\label{lem:second derivative}
Suppose $f:\RR\mapsto\RR_{>0}$ is a continuous function. For any $\alpha\in \RR, \,x\in (\frac{1}{2},1)$, there exist $\delta>0$ s.t. if  $\left\|\bxi-(\alpha,\alpha)\right\|_{\infty}<\delta$ and $\xi_1>\xi_2$, then
    $$
    \min_{r\in [\xi_2,\xi_1]}f(r) x^2>\max_{r\in [\xi_2,\xi_1]}f(r) (1-x)^2.
    $$
\end{lemma}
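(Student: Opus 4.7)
\noindent
The plan is to exploit two facts: (i) since $x>\tfrac12$ we have $x^2>(1-x)^2$, so there is a strictly positive gap between the two coefficients, and (ii) $f$ is continuous and strictly positive at $\alpha$, so on a small enough neighborhood of $\alpha$ the oscillation of $f$ is arbitrarily small while $f$ stays bounded away from $0$. Combining these two, the ratio between the maximum and minimum of $f$ on $[\xi_2,\xi_1]$ can be made arbitrarily close to $1$ when $\bxi$ is close enough to $(\alpha,\alpha)$, which is all we need.

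Concretely, first I would set $c:=f(\alpha)>0$ and choose a target accuracy $\varepsilon\in(0,c)$ small enough that
$$(c-\varepsilon)x^2 > (c+\varepsilon)(1-x)^2.$$
Since $x^2-(1-x)^2=2x-1>0$, this reduces to the scalar inequality $\varepsilon\bigl(x^2+(1-x)^2\bigr)<c(2x-1)$, which plainly admits a positive solution $\varepsilon$ depending only on $c$ and $x$.

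Next I would invoke continuity of $f$ at $\alpha$ to produce $\delta>0$ such that $|f(r)-c|<\varepsilon$ for all $r$ with $|r-\alpha|<\delta$. Then, whenever $\|\bxi-(\alpha,\alpha)\|_\infty<\delta$ and $\xi_1>\xi_2$, the interval $[\xi_2,\xi_1]$ is contained in $(\alpha-\delta,\alpha+\delta)$, so
$$\min_{r\in[\xi_2,\xi_1]}f(r)\ge c-\varepsilon \qquad\text{and}\qquad \max_{r\in[\xi_2,\xi_1]}f(r)\le c+\varepsilon.$$
Multiplying the first by $x^2$ and the second by $(1-x)^2$ and using the choice of $\varepsilon$ yields the claim.

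I do not anticipate any real obstacle: the argument is a straightforward $\varepsilon$–$\delta$ continuity exercise, whose only mild subtlety is ordering the quantifiers correctly (choose $\varepsilon$ from $c$ and $x$ first, then obtain $\delta$ from continuity). Note also that the hypothesis $x\in(\tfrac12,1)$ is used exactly once, to guarantee $2x-1>0$ so that the required $\varepsilon$ exists; if $x=\tfrac12$ the statement fails, which is a useful sanity check on the proof.
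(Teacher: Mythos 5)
Your proof is correct and takes essentially the same route as the paper: invoke continuity of $f$ at $\alpha$ to control its oscillation on $[\xi_2,\xi_1]$, and use the strict gap $x^2>(1-x)^2$ coming from $x>\tfrac12$ to absorb that oscillation. The only (cosmetic) difference is that you bracket $f$ in an additive band $f(\alpha)\pm\varepsilon$, whereas the paper brackets it between $f(\alpha)/(4x^2)$ and $f(\alpha)/(4(1-x)^2)$ so both sides compare directly to $f(\alpha)/4$.
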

\begin{proof}
As $x\in (\frac{1}{2},1)$, we derive $\frac{1}{4x^2}<1<\frac{1}{4(1-x)^2}$. By the continuity of $f$ and $f(\alpha)>0$, there exists $\delta>0$ such that if $\left|r-\alpha\right|<\delta$,
$$
\frac{f(\alpha)}{4x^2}< f(r)< \frac{f(\alpha)}{4(1-x)^2}.
$$
Consequently, when $\left\|\bxi-(\alpha,\alpha)\right\|_{\infty}<\delta$ and $\xi_1>\xi_2$, 
$$
\min_{r\in [\xi_2,\xi_1]}f(r)\ge \min_{\left|r-\alpha\right|<\delta}f(r)>\frac{f(\alpha)}{4x^2}\hbox{ and }\max_{r\in [\xi_2,\xi_1]}f(r)\le \max_{\left|r-\alpha\right|<\delta}f(r)<\frac{f(\alpha)}{4(1-x)^2},
$$
which yields that $\min_{r\in [\xi_2,\xi_1]}f(r)x^2>\frac{f(\alpha)}{4}>\max_{r\in [\xi_2,\xi_1]}f(r)(1-x)^2$.
\end{proof}

\medskip

\begin{lemma}\label{lem:entropy}
    If $\mu_1>\mu_2$ and $\mu_1+\mu_2\ge 1$, then $\frac{1-\mu_2}{1-\mu_1}\ge \frac{\mu_1}{\mu_2}$.
\end{lemma}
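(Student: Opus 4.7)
The statement is equivalent to a one-line algebraic rearrangement, so the plan is simply to clear denominators and factor. Concretely, since $\mu_1,\mu_2\in(0,1)$, both $1-\mu_1$ and $\mu_2$ are strictly positive, so the inequality $\frac{1-\mu_2}{1-\mu_1}\ge \frac{\mu_1}{\mu_2}$ is equivalent to $\mu_2(1-\mu_2)\ge \mu_1(1-\mu_1)$ after cross-multiplication.

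I would then move everything to one side and factor using the difference of squares: $\mu_2(1-\mu_2)\ge \mu_1(1-\mu_1)$ rearranges to $\mu_1^2-\mu_2^2\ge \mu_1-\mu_2$, which factors as
\begin{equation*}
(\mu_1-\mu_2)\,(\mu_1+\mu_2-1)\ \ge\ 0.
\end{equation*}
The two hypotheses of the lemma directly supply nonnegativity of each factor: $\mu_1>\mu_2$ gives $\mu_1-\mu_2>0$, and $\mu_1+\mu_2\ge 1$ gives $\mu_1+\mu_2-1\ge 0$. Hence the product is nonnegative, and unwinding the equivalences yields the claim.

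There is no real obstacle here; the only thing to be careful about is noting that $1-\mu_1>0$ and $\mu_2>0$ before multiplying out (which is immediate from $\mu_1,\mu_2\in(0,1)$), so that the direction of the inequality is preserved. I would write the proof as a short two-line chain of equivalent inequalities ending with $(\mu_1-\mu_2)(\mu_1+\mu_2-1)\ge 0$.
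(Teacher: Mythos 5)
Your proof is correct and follows essentially the same route as the paper's: both reduce the claim to $\mu_2(1-\mu_2)\ge\mu_1(1-\mu_1)$ by cross-multiplication, and then verify that inequality by elementary algebra (you factor as $(\mu_1-\mu_2)(\mu_1+\mu_2-1)\ge 0$, while the paper completes the square around $1/2$ — a cosmetic difference).
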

\begin{proof}
Observe that the given two assumptions can be rewritten as $\mu_1-\frac{1}{2}>\mu_2-\frac{1}{2}$ and $\frac{1}{2}-\mu_2\le \mu_1-\frac{1}{2}$. Thus, we conclude that
$$
(1-\mu_2)\mu_2=- \left(\mu_2-\frac{1}{2}\right)^2+\frac{1}{4}\ge -\left(\mu_1-\frac{1}{2}\right)^2+\frac{1}{4}=(1-\mu_1)\mu_1,
$$
which is equivalent to the desired conclusion.
\end{proof}

\newpage
\section{Proof of Theorem~\ref{thm:tight_analysis_SR}}\label{app:sr_K}
Throughout this section, we assume, without loss of generality, that $\mu_1>\mu_2\ge \ldots\ge \mu_K$.  
Recall that
\begin{equation}\label{eq:Gamma_appD}
\Gamma_j(\bmu)=\min_{J\in \mathcal{J}_j(\bmu)}\inf_{\bl\in \Lambda:\lambda_{1}\le \min_{k\in J} \lambda_k}\sum_{k\in J}d(\lambda_k,\mu_k),
\end{equation}
and $\mathcal{J}_j(\bmu)=\{J\subseteq [K]:\left|J\right|=j,1\in J\}$.
\begin{proof}[Proof of Theorem~\ref{thm:tight_analysis_SR}]
As mentioned in Section~\ref{sec:tight}, \cite{wang2023best} establishes that
\begin{equation*}\tag{\ref{eq:SR upper}}
     \limsup_{T\rightarrow \infty} \frac{T}{\log (1/p_{\bmu ,T})}\le \max_{j=2,\ldots, K}\frac{j\olog K}{\Gamma_j(\bmu)},\,\forall \bmu\in \Lambda.
\end{equation*}
Hence we just need to prove the following Lemma~\ref{lem:SR_lower}.
\begin{lemma}\label{lem:SR_lower}
 Under Algorithm~\ref{alg:SR}, one has     \begin{equation}\label{eq:SR lower}
    \liminf_{T\rightarrow \infty} \frac{T}{\log (1/p_{\bmu ,T})}\ge \max_{j=2,\ldots, K}\frac{j\olog K}{\Gamma_j(\bmu)},\,\forall \bmu\in \Lambda.
\end{equation}
\end{lemma}
Combining (\ref{eq:SR upper}) and (\ref{eq:SR lower}), we then complete the proof.
\end{proof}

\begin{proof}[Proof of Lemma~\ref{lem:SR_lower}]

For all $j=2,\ldots, K$, let ${\cal E}_j=\{\ell_j=1\}$ be the event that SR discards the best arm, $1$, when there are $j$ candidates remaining. As under the event $\cup_{j=2}^K{\cal E}_j$, $1$ is removed before the end, we have 
$$
p_{\bmu,T}=\sum_{j=2}^K\PP_{\bmu}[{\cal E}_j]\ge \PP_{\bmu}[{\cal E}_j],\,\quad\forall j=2,\ldots, K.
$$
Consequently, the above inequalities imply that 
$$
\liminf_{T\to\infty}\frac{T}{\log (1/p_{\bmu,T})}\ge \max_{j=2,\ldots, K}\liminf_{T\to\infty}\frac{T}{\log (1/\PP_{\bmu}[{\cal E}_j])}.
$$
The proof of the lemma is reduced to showing that for any $j=2,\ldots, K$,
\begin{equation}\label{eq:SRj lower}
\liminf_{T\to\infty}\frac{T}{\log (1/\PP_{\bmu}[{\cal E}_j])}\ge \frac{j\olog K}{\Gamma_j(\bmu)}.
\end{equation}

\noindent
{\bf Proof of (\ref{eq:SRj lower}).} Let $j\in \{2,\ldots, K\}$.
For convenience, we set $\mu_{K+1}$ equal to $0$. We then consider some $\bl\in \Lambda_j$, where
$$
\Lambda_j=\left\{\bl\in (0,1)^K\times  \{0\}:\mu_{j+1}<\lambda_1\le \min_{k\in [j]}\lambda_k,\text{ and }\lambda_k=\mu_k,\forall k\ge j+1\right\}.
$$
Applying a standard change-of-measure argument, as in the proof of Theorem~\ref{thm:main}, yields that 
\begin{equation}\label{eq:SRj lower-1}
    \sum_{k=1}^K\EE_{\bmu}[N_k(\lfloor \theta T\rfloor)]d(\lambda_k,\mu_k)\ge d(\PP_{\bl}[{\cal E}_j],\PP_{\bmu}[{\cal E}_j]) \ge\PP_{\bl}[{\cal E}_j]\log\left(\frac{1}{\PP_{\bmu}[{\cal E}_j]}\right) -\log 2,
\end{equation}
where the last inequality stems from the fact that $d(p,q)\ge p\log(1/q)-\log 2$ for all $p,q\in [0,1]$. Thanks to law of large number, as $T\to \infty$, $\EE_{\bmu}[\omega_k(\lfloor\theta T\rfloor)]\to\frac{1}{j\olog K}$ for each $k=1,\ldots,j$, and $\PP_{\bl}[{\cal E}_j]\to 1$. Taking $T$ to infinity in (\ref{eq:SRj lower-1}) yields that 
$$
\liminf_{T\to\infty}\frac{T}{\log (1/\PP_{\bmu}[{\cal E}_j])}\ge \frac{j\olog K}{\sum_{k=1}^jd(\lambda_k,\mu_k)}.
$$
Since the above inequality holds for all $\bl\in \Lambda_j$, we have 
\begin{align*}
\liminf_{T\to\infty}\frac{T}{\log (1/\PP_{\bmu}[{\cal E}_j])}&\ge\sup_{\bl\in \Lambda_j} \frac{j\olog K}{\sum_{k=1}^jd(\lambda_k,\mu_k)}\\
&=\frac{j\olog K}{\inf_{\bl \in \Lambda_j}\sum_{k=1}^jd(\lambda_k,\mu_k)}=\frac{j\olog K}{\Gamma_j(\bmu)},
\end{align*}
where the last equation directly follows from (\ref{eq:Gammaj-2}) in Proposition~\ref{prop:Gammaj} in Appendix~\ref{app:Gamma}.
    
\end{proof}

\subsection{Derivation of Computationally Tractable Form of $\Gamma_j(\bmu)$}\label{app:Gamma}
Throughout this subsection, we define the function $\Psi_j:\RR^j\mapsto\RR$ as:
\begin{equation}\label{eq:Psi_j}
    \Psi_j(x_1,\ldots,x_j)\coloneqq \inf_{\bet\in \RR^j:\eta_{1}\le \min_{k\in [j]} \eta_k}\sum_{k=1}^j\bar{d}(x_k,\eta_k).
\end{equation}
In the following, we establish important properties of $\Psi_j$ for a fixed $j\in \{2,\ldots, K\}$. These properties will help us to understand $\Gamma_j(\bmu)$.

\medskip
\begin{proposition}\label{prop:Psi_j-1}
If $x_1>x_2\ge\ldots\ge x_j$, then
\begin{equation}\label{eq:Psi_j-1}
     \Psi_j(x_1,\ldots,x_j)=\frac{\sum_{k\in \Ij (\xx)}\bar{d}(x_k, \frac{\sum_{k'\in \Ij (\xx)} x_{k'}}{\left|\Ij (\xx)\right|})}{\left|\Ij(\xx)\right|},
\end{equation}
where 
$\Ij(\xx)=\left\{i \in \{2,\ldots, j\} :  x_i(j-i+1)<x_1+\sum_{i< k\le j} x_k\right\}\cup \{1\}$.
Moreover, the minimizer $\bet^\star$ of \eqref{eq:Psi_j} satisfies: $x_j<\eta_1^\star\le \min_{k\in [j]}\eta_k^\star$, i.e., 
\begin{equation}\label{eq:Psi_j-2} \Psi_j(x_1,\ldots,x_j)=\inf_{\bet\in \RR^j:x_j<\eta_1\le \min_{k\in [j]}\eta_k} \sum_{k=1}^j \bar{d}(x_k,\eta_k).
\end{equation}
\end{proposition}
\begin{proof}
       Observe that in the optimization problem (\ref{eq:Psi_j}), there exists $\bet\in \RR^j$ such that all the constraints are strict (satisfying Slater's condition). Thus, the solution of (\ref{eq:Psi_j}) can be identified by verifying the KKT conditions. The corresponding Lagrangian function is 
    $$
    {\cal L}(\bet,\alpha_2,\ldots,\alpha_j)=\sum_{k=1}^j\bar{d}(x_k,\eta_k)+\sum_{k=2}^j\alpha_k(\eta_1-\eta_k),\,\forall (\bet,\alpha_2,\ldots,\alpha_j)\in \RR^j\times \RR_{\ge 0}^{j-1}.
    $$
    Let $(\bet^\star, \alpha_2^\star,\ldots,\alpha_j^\star)$ be a saddle point of $\mathcal{L}$. It satisfies KKT conditions:
    \begin{align}
\eta_1^\star\le\eta^\star_k,\forall k=2,\ldots ,j,\tag{Primal Feasibility}\\
\alpha_k^\star\ge 0,\,\forall k=2,\ldots,j,\tag{Dual Feasibility}\\
\frac{\partial}{\partial\eta_k}{\cal L}(\bet^\star,\alpha_2^\star,\ldots,\alpha_j^\star )=0, \,\forall k=1,\ldots,j,\tag{Stationarity}\\
\alpha^\star_k(\eta_1^\star-\eta_k^\star)=0, \,\forall k=2,\ldots,j.\tag{Complementarity}
    \end{align}
    Recall that $\bar{d}(x,\eta)=\phi(x)-\phi(\eta)-(x-\eta)\phi'(\eta)$ (see (\ref{eq:dual_representation})), the partial differentiation on the second argument is hence $  \frac{\partial }{\partial \eta}\bar{d}(x,\eta)=\phi''(\eta)(\eta-x)$. We rewrite the above stationarity condition as:
    \begin{equation}
        \phi''(\eta_1^\star)(\eta_1^\star-x_1)+\sum_{k=2}^j\alpha_k^\star  =0;\,\phi''(\eta_k^\star)(\eta_k^\star-x_k)=\alpha_k^\star ,\,\forall k=2,\ldots,j,\tag{Stationarity}
    \end{equation}
Next observe that $\forall i\in \Ij(x)\setminus\{1\}$, it holds that 
    \begin{align}\label{eq:D2-1}
     \nonumber   \frac{\sum_{k\in\Ij(\xx)}x_k}{\left|\Ij(\xx)\right|}-x_i&=\frac{1}{\left|\Ij(\xx)\right|}\left(\sum_{k\in\Ij(\xx)}x_k-\left|\Ij(\xx)\right|x_i\right)\\
        &\ge \frac{1}{\left|\Ij(\xx)\right|}\left(x_1+\sum_{i<k\le j}x_k-(j-i+1)x_i\right)>0,
    \end{align}
    where the first inequality stems from $x_1> x_2\ge \ldots\ge x_K$, and the last one holds directly from the definition of $\Ij(\xx)$. 
    
     One can verify $(\bet^\star,\alpha^\star_2,\ldots,\alpha^\star_j)$ defined below satisfies the KKT conditions listed above.
    $$
    \eta^\star_i=\left\{
    \begin{array}{ll}
      \frac{\sum_{k\in\Ij(\xx)}x_{k}}{\left|\Ij(\xx)\right|} ,  &\text{if }i\in \Ij(\xx),  \\
        x_i, & \text{otherwise,}
    \end{array}
    \right.
     \alpha^\star_i=\left\{
    \begin{array}{ll}
      \phi''( \frac{\sum_{k\in \Ij(\xx)}x_{k}}{\left|\Ij(\xx)\right|})\left(\frac{\sum_{k\in\Ij(\xx)}x_{k}}{\left|\Ij(\xx)\right|}-x_i\right) ,  &\text{if }i\in \Ij(\xx),  \\
        0, & \text{otherwise,}
    \end{array}
    \right.
    $$
 where $\alpha^*_i\ge 0,\,\forall i\in \Ij(\xx)$ as (\ref{eq:D2-1}).
 As for (\ref{eq:Psi_j-2}), we observe that $\{1,j\}\in \Ij(\xx)$ as $x_1>x_j$. Hence the above minimizer $\bet^\star$ needs to satisfy that  $x_j<\eta_1^\star\le \min_{k\in [j]}\eta_k^\star$, and (\ref{eq:Psi_j-2}) follows directly.
\end{proof}

\medskip
\begin{proposition}\label{prop:Psi_j-2}
    Let $(x_1,\ldots,x_j) \in \RR^j$ be such that $x_1 > x_2 \ge \ldots \ge x_j$. Then, for any $k \neq 1$, we have $\frac{\partial}{\partial x_k}\Psi_j (x_1,\ldots,x_j) \le 0$. Consequently, if $\yy,\yy’ \in \{\zz \in \RR^j: z_1 > z_2 \ge \ldots \ge z_j\}$ are such that $y_1 = y’_1$ and $y_k \ge y’_k$ for all $k = 2,\ldots,j$, then it follows that $\Psi_j(y_1,y_2,\ldots,y_j) \le \Psi_j(y_1,y’_2,\ldots,y’_j)$.
\end{proposition}
\begin{proof}
Recall that $\bar{d}(x,\eta)=\phi(x)-\phi(\eta)-(x-\eta)\phi'(\eta)$ (see (\ref{eq:dual_representation})), together with (\ref{eq:Psi_j-1}) in Proposition~\ref{prop:Psi_j-1}, one can deduce that
\begin{equation}\label{eq:Psi_j-3}
\Psi_j(\xx)=\sum_{k\in \Ij(\xx)}\phi(x_k)-\left|\Ij(\xx)\right|\phi(\frac{\sum_{k\in \Ij(\xx)}x_k}{\left|\Ij(\xx)\right|}).
\end{equation}
From the r.h.s of (\ref{eq:Psi_j-3}), the partial differential on the $k$-th coordinate yields that
$$
\frac{\partial}{\partial x_k}\Psi_j (x_1,\ldots,x_j)=
\left\{\begin{array}{ll}
   0 , & \hbox{if }k\notin \Ij(\xx), \\
   \phi'(x_k)-\phi'(\frac{\sum_{k\in \Ij(\xx)}x_k}{\left|\Ij(\xx)\right|}),  & \hbox{otherwise. } 
\end{array}\right.
$$
Let $i\in \Ij(\xx)\setminus \{1\}$, the definition of $\Ij(\xx)$ implies that 
$$
x_i<\frac{\sum_{i<k\le j}x_k}{j-i+1}\le \frac{\sum_{k\in \Ij(\xx)}x_k}{\left|\Ij(\xx)\right|}.
$$
Because $\phi'$ is a strictly increasing function, we then conclude that  $    \frac{\partial}{\partial x_k}\Psi_j (x_1,\ldots,x_j)\le 0,\,\forall k=2,\ldots, j$.
\end{proof}

\medskip
\noindent
We next present the useful forms for $\Gamma_j(\bmu)$ for any $j=2,\ldots, K$.

\medskip

\begin{proposition}\label{prop:Gammaj}
   Let $\xi_k=\phi’^{-1}(\mu_k)=\log\frac{\mu_k}{1-\mu_k},\,\forall k\in [K]$ with $\mu_1>\mu_2\ge \ldots\ge \mu_K$, where $\phi$ is the strictly convex function shown in Appendix~\ref{app:dual}.  Then
   \begin{equation}\label{eq:Gammaj-1}
        \Gamma_j(\bmu)=\frac{\sum_{k\in \Ij (\bmu)}\bar{d}(\xi_k, \frac{\sum_{k'\in \Ij (\bmu)} \xi_{k'}}{\left|\Ij (\bmu)\right|})}{\left|\Ij(\bmu)\right|},
   \end{equation}
where $\Ij(\bmu)=\left\{i \in \{2,\ldots, j\} :  \xi_i(j-i+1)<\xi_1+\sum_{i< k\le j}\xi_k\right\}\cup \{1\}$. Moreover, the minimizer $(J^\star, \bl^\star)$ of \eqref{eq:Gamma_appD} satisfies, $J^\star =[j]$, $\mu_{j+1}<\lambda_1^\star \le \min_{k\in [j]}\lambda_k^\star$, and $\lambda_k^\star=\mu_k\;\forall k\ge j+1$, i.e., 
\begin{equation}\label{eq:Gammaj-2}
     \Gamma_j(\bmu)=\inf_{\bl\in \Lambda_j}\sum_{k=1}^jd(\lambda_k,\mu_k),
\end{equation}
where
$$
\Lambda_j=\left\{\bl\in (0,1)^K:\mu_{j+1}<\lambda_1\le \min_{k\in [j]}\lambda_k,\text{ and }\lambda_k=\mu_k,\forall k\ge j+1\right\}.
$$
\end{proposition}
\begin{proof}
    Recall that $\Gamma_j(\bmu)=\min_{J\in \mathcal{J}_j(\bmu)}\inf_{\bl\in \Lambda:\lambda_1\le \min_{k\in J} \lambda_k}\sum_{k\in J}d(\lambda_k,\mu_k)$. The fact that $d(\lambda_k,\mu_k)=\bar{d}(\xi_k,\phi'^{-1}(\lambda_k))$ (see (\ref{eq:dual_representation})) implies that 
    \begin{align}\label{eq:Gammaj-3}
\Gamma_j(\bmu)&=\min_{J\in \mathcal{J}_j(\bmu)}\inf_{\bet\in \RR^j:\lambda_{1}\le \min_{k\in J} \lambda_k}\sum_{k\in J}\bar{d}(\xi_k,\eta_k)\nonumber\\
&=\min_{J\in {\cal J}_j(\bmu)}\Psi_j(\xi_{J_1},\ldots,\xi_{J_j}),
\end{align}
where $J_k$ denotes the $k$-th smallest index in $J$. Recall that $\mathcal{J}_j(\bmu)=\{J\subseteq [K]:\left|J\right|=j,1(\bmu)\in J\}$, hence $ [j] \in \mathcal{J}_j(\bmu)$.
Therefore, from Proposition~\ref{prop:Psi_j-2} and $\xi_1>\xi_2\ge \ldots\ge \xi_K$, we deduce that
\begin{equation}\label{eq:Gammaj-4}
    \min_{J\in {\cal J}_j(\bmu)}\Psi_j(\xi_{J_1},\ldots,\xi_{J_j})=\Psi_j(\xi_1,\ldots,\xi_j).
\end{equation} 
(\ref{eq:Gammaj-1}) follows as the consequence of (\ref{eq:Gammaj-3}), (\ref{eq:Gammaj-4}), and (\ref{eq:Psi_j-1}) in Proposition~\ref{prop:Psi_j-1}. \\
\noindent
As for (\ref{eq:Gammaj-2}), (\ref{eq:Gammaj-3}), (\ref{eq:Gammaj-4}), and (\ref{eq:Psi_j-2}) in Proposition~\ref{prop:Psi_j-1} yield that 
\begin{equation}\label{eq:Gammaj-5}
    \Gamma_j(\bmu)=\inf_{\bet\in \RR^j:\xi_j<\eta_1\le \min_{k\in [j]}\eta_k} \sum_{k=1}^j \bar{d}(\xi_k,\eta_k).
\end{equation}
Using fact that $\bar{d}(\xi_k,\phi'^{-1}(\lambda_k))=d(\lambda_k,\mu_k)$ again, one can derive (\ref{eq:Gammaj-2}) from (\ref{eq:Gammaj-5}).
\end{proof}

\newpage
\section{Examples of Stable Algorithms}\label{app:eftt}

In this section, we present various examples of stable algorithms (Definition~\ref{def:stable}). We show that algorithms following one of the design principles below are stable. We assume that in all cases, there is an initialization phase where each arm is sampled $\lfloor \alpha T\rfloor$ times for some $\alpha>0$. This ensures that the arm rewards will be estimated accurately and that the algorithms are consistent. In the second phase, the algorithm design can be:
\begin{enumerate}
    \item {\it Uniform Sampling if Empirically Close.} The algorithm equally samples arms whenever the estimated gap $| \hat{\mu}_1(\tau)-\hat{\mu}_2(\tau)|$ of the mean arm rewards on the $\tau=\lfloor\alpha T\rfloor$-th round falls below a fixed threshold $\varepsilon>0$ is stable. No rules are added if the estimated gap is above the threshold. The algorithm could, for example, use the estimated optimal static exploration rate $x^\star(\hat\bmu(t))=\argmax_x g(x,\hat\bmu(t))$. The algorithm with such a choice is referred to as ETT (Estimate and Thresholded Tracking), and it is discussed in \ref{app:ETT}.
    \item {\it Track a Symmetric Continuous Function of the Empirical Rewards.} Here, the algorithm samples arms so that up to round the $t$-th round, arm 2 has been sampled $tf(\hat{\bmu}(t))$ where $f$ is a continuous function satisfying $f(a,a)=1/2$ for any $a$ and $\hat{\bmu}(t)$ denotes the empirical rewards at round $t$. We refer to this kind of algorithm as TCSF (Track-a-Continuous-Symmetric-Function), and it is discussed in \ref{app:TCSF}. 
\end{enumerate}

We present these algorithms in detail below and establish their stability. We note that the class of algorithms satisfying one of the above design principles is wide, and this makes the class of stable and consistent algorithms relevant. Simple numerical experiments are presented at the end of this section, in \ref{app:num}.

\subsection{The ETT Algorithm}\label{app:ETT}

The pseudo-code of ETT is presented in  Algorithm~\ref{alg:EFTT}.

\begin{lemma}\label{lem:EFTT_is_stable}
The algorithm ETT with input $\alpha\in (0,1/2)$ and $\varepsilon>0$ is stable.
\end{lemma}

\begin{algorithm}
\caption{Estimate and Thresholded Tracking (ETT)}\label{alg:EFTT}
\begin{algorithmic}[1]
\STATE {\bf Input:} $\alpha>0$, $\varepsilon>0$
\STATE Play each arm $\max\{\lfloor \alpha T\rfloor,1\}$ times
\STATE $\tau\leftarrow 2\max\{\lfloor \alpha T\rfloor,1\}$
\STATE Estimate the optimal allocation $\hat{x}^\star\leftarrow \argmax_x g(x,\hat{\bmu}(\tau))$
\IF {$\left|\hat{\mu}_1(\tau)-\hat{\mu}_2(\tau)\right|>\varepsilon$}
\FOR {$t=\tau+1,\ldots, T$}
    
        \STATE $\text{play } A_t \leftarrow\left\{\begin{array}{cc}
                2 &\text{ if }\hat{x}^\star>\frac{N_2(t)}{t},  \\
                1& \text{otherwise}
              \end{array}\right.$
\ENDFOR
\ELSE
\FOR {$t=\tau+1,\ldots, T$}
        \STATE play $A_t\leftarrow\argmin_{k}N_k(t)$ (tie broken arbitrarily)
        \ENDFOR

    \ENDIF

\STATE $\hat{\imath} \gets \argmax_{k \in \{1, 2\}} \hat{\mu}_k(T)$ (tie broken arbitrarily)
\STATE {\bf Output:}  $\hat{\imath}$
\end{algorithmic}
\end{algorithm}

\begin{proof}[Proof of Lemma~\ref{lem:EFTT_is_stable}]    
 From the definition of a stable algorithm (Definition \ref{def:stable}), it suffices to show $\lim_{T\to\infty}\EE_{\bmu}[\omega_2(T)]=1/2$ whenever $\left|\mu_1-\mu_2\right|<\varepsilon/3$. We observe that 
\begin{align*}
    \left|\hat{\mu}_1(2\lfloor \alpha T\rfloor)-\hat{\mu}_2(2\lfloor \alpha T\rfloor)\right|&\le \left|\hat{\mu}_1(2\lfloor \alpha T\rfloor)-\mu_1\right|+\left|\mu_1-\mu_2\right|+\left|\mu_2-\hat{\mu}_2(2\lfloor \alpha T\rfloor)\right|\\
    &\le \frac{\varepsilon}{3}+\left|\hat{\mu}_1(2\lfloor \alpha T\rfloor)-\mu_1\right|+\left|\mu_2-\hat{\mu}_2(2\lfloor \alpha T\rfloor)\right|,
\end{align*}
where the first inequality is from the triangle inequality.
Hence, 
\begin{equation}\label{eq:EFTT}
    \{ \left|\hat{\mu}_1(2\lfloor \alpha T\rfloor)-\hat{\mu}_2(2\lfloor \alpha T\rfloor)\right|>\varepsilon\}\subseteq \left\{\left|\hat{\mu}_1(2\lfloor \alpha T\rfloor)-\mu_1\right|> \frac{\varepsilon}{3}\right\} \cup \left\{\left|\hat{\mu}_2(2\lfloor \alpha T\rfloor)-\mu_2\right|>\frac{\varepsilon}{3}\right\}. 
\end{equation}
Furthermore, the design of Algorithm~\ref{alg:EFTT} yields that if $\left|\hat{\mu}_1(2\lfloor \alpha T\rfloor)-\hat{\mu}_2(2\lfloor \alpha T\rfloor)\right|\le \varepsilon$, then $\left|\omega_2(T)-1/2 \right| \le  1/T$.
This fact together with (\ref{eq:EFTT}) yields that
\begin{align}\label{eq:EFTT1}
\nonumber\PP_{\bmu}\left[\left|\omega_2(T)-\frac{1}{2} \right| > \frac{1}{T}\right]&\le \PP_{\bmu}\left[ \left|\hat{\mu}_1(2\lfloor \alpha T\rfloor)-\hat{\mu}_2(2\lfloor \alpha T\rfloor)\right|>\varepsilon\right]\\
\nonumber    &\le \PP_{\bmu}\left[\left|\hat{\mu}_1(2\lfloor \alpha T\rfloor)-\mu_1\right|> \frac{\varepsilon}{3}\right]+\PP_{\bmu}\left[\left|\hat{\mu}_2(2\lfloor \alpha T\rfloor)-\mu_2\right|> \frac{\varepsilon}{3}\right]\\
    &\le 4\exp\left(\frac{-18\lfloor \alpha T\rfloor}{\varepsilon^2}\right),
\end{align}
where the last inequality is an application of Hoeffding inequality. From (\ref{eq:EFTT1}), we can conclude that $\lim_{T\to \infty}\EE_{\bmu}\left[\omega_2(T)\right]=1/2$ and hence Algorithm~\ref{alg:EFTT} is stable.

\end{proof}

\subsection{The TCSF Algorithm}\label{app:TCSF}

Let $f:[0,1]^2\to [0,1]$ be a continuous function such that $f(a,a)=1/2$ for all $a$. We propose two versions of the TCSF algorithm, one randomized and one de-randomized. Their pseudo-codes are presented in Algorithms \ref{alg:Rf} and \ref{alg:Df}, respectively.

\begin{algorithm}
\caption{Randomized TCSF}\label{alg:Rf}
\begin{algorithmic}[1]
\STATE {\bf Input:} function $f$ and $\alpha>0$
\STATE Play each arm $\max\{\lfloor \alpha T\rfloor,1\}$ times
\STATE $\tau\leftarrow 2\max\{\lfloor \alpha T\rfloor,1\}$
\FOR {$t=\tau+1,\ldots, T$}
    \STATE $\text{play } A_t \leftarrow\left\{\begin{array}{cl}
            2&\text{ w.p. } f(\hat{\bmu}(t))  \\
            1&\text{ w.p. }1-f(\hat{\bmu}(t))
            \end{array}\right.$
\ENDFOR
\STATE $\hat{\imath} \gets \argmax_{k \in \{1, 2\}} \hat{\mu}_k(T)$ (tie broken arbitrarily)
\STATE {\bf Output:}  $\hat{\imath}$
\end{algorithmic}
\end{algorithm}

\medskip
\begin{algorithm}
\caption{De-randomized TCSF}\label{alg:Df}
\begin{algorithmic}[1]
\STATE {\bf Input:} function $f$ and $\alpha>0$
\STATE Play each arm $\max\{\lfloor \alpha T\rfloor,1\}$ times
\STATE $\tau\leftarrow 2\max\{\lfloor \alpha T\rfloor,1\}$
\FOR {$t=\tau+1,\ldots, T$}
    \STATE $\text{play } A_t \leftarrow\left\{\begin{array}{cl}
            2&\text{ if } \omega_2(t)< f(\hat{\bmu}(t))  \\
            1&\text{ otherwise }
            \end{array}\right.$
\ENDFOR
\STATE $\hat{\imath} \gets \argmax_{k \in \{1, 2\}} \hat{\mu}_k(T)$ (tie broken arbitrarily)
\STATE {\bf Output:}  $\hat{\imath}$
\end{algorithmic}
\end{algorithm}

\medskip
\noindent
In the following, we show that Algorithm~\ref{alg:Rf} (resp. Algorithm~\ref{alg:Df}) is stable in Lemma~\ref{lem:RF} (resp. Lemma~\ref{lem:DF}).
\begin{lemma}\label{lem:RF}
If $f:[0,1]^2\mapsto (0,1)$ be a continuous function satisfying that $f(a,a)=1/2,\,\forall a\in [0,1]$, then Algorithm~\ref{alg:Rf} is stable.
\end{lemma}
\begin{proof}
Thanks to Lemma~\ref{lem:generic_stable} in Appendix~\ref{app:generic_tech}, it suffices to show (\ref{eq:generic_stable_low}) and (\ref{eq:generic_stable_up}). In the following, we prove (\ref{eq:generic_stable_low}), and (\ref{eq:generic_stable_up}) hold in a similar manner. To this aim, we fix $a\in (0,1)$ and $\varepsilon>0$. As $f$ is continuous at $(a,a)$ and $f(a,a)=1/2$, there exists $\eta>0$ such that $\left|f(x,y)-1/2\right|<\varepsilon$ if $\left\|(x,y)-(a,a)\right\|_\infty<\eta$. (\ref{eq:generic_stable_low}) follows provided that we show 
\begin{equation}\label{eq:rf-4}
\lowlim_{T\to\infty}\EE_{\bmu}\left[ \omega_2(T)\right]\ge \frac{1}{2}-\varepsilon,\,    \forall \bmu \in \Lambda\text{ such that }\left\|\bmu-(a,a)\right\|_{\infty}<\frac{\eta}{2}.
\end{equation}
Let $\bmu\in \Lambda$ such that $\left\|\bmu-(a,a)\right\|_{\infty}<\frac{\eta}{2}$ and $T\in \NN$ such that $\alpha T>1$. We observe 
\begin{align}\label{eq:rf-1}
\nonumber    \EE_{\bmu}[\omega_2(T)]&\ge \alpha-\frac{1}{T}+\frac{1}{T}\EE_{\bmu}[\sum_{t=\tau+1}^T \mathbbm{1}\{A_t=2\}]\\
\nonumber     &\ge \alpha-\frac{1}{T}+\frac{1}{T}\EE_{\bmu}[\sum_{t=\tau+1}^T \mathbbm{1}\{A_t=2,\left\|\hat{\bmu}(t)-\bmu\right\|_\infty\le\eta/2\}]\\
    &=\alpha-\frac{1}{T}+\frac{1}{T}\EE_{\bmu}[\sum_{t=\tau+1}^T f(\hat{\bmu}(t))\mathbbm{1}\{\left\|\hat{\bmu}(t)-\bmu\right\|_\infty\le\eta/2\}],
\end{align}
where the last inequality is simply from the algorithm design. Notice that if $\left\|\hat{\bmu}(t)-\bmu\right\|_\infty\le\eta/2$, then $\left\|\hat{\bmu}(t)-(a,a)\right\|_\infty\le\eta$, and hence $f(\hat{\bmu}(t))>1/2-\varepsilon$.
We then derive that
\begin{align}\label{eq:rf-2}
\nonumber     \EE_{\bmu}[\sum_{t=\tau+1}^T f(\hat{\bmu}(t))\mathbbm{1}\{\left\|\hat{\bmu}(t)-\bmu\right\|_\infty\le\eta/2\}]&\ge  (\frac{1}{2}-\varepsilon)\EE_{\bmu}[\sum_{t=\tau+1}^T \mathbbm{1}\{\left\|\hat{\bmu}(t)-\bmu\right\|_\infty\le\eta/2\}]\\
    &=(\frac{1}{2}-\varepsilon) (T-\tau-\EE_{\bmu}[\sum_{t=\tau+1}^T \mathbbm{1}\{\left\|\hat{\bmu}(t)-\bmu\right\|_\infty>\eta/2\}]).
\end{align}
As for each $t> \tau$ and $k\in \{1,2\}$, $N_k(t)\ge \alpha T\ge \alpha (t-\tau)$, an application of Lemma~\ref{lem:concentration} in Appendix~\ref{app:generic_tech} with $H=\{t> \tau\},\,\zeta=\alpha$ and $\delta=\eta/2$ yields that
\begin{equation}\label{eq:rf-3}
    \EE_{\bmu}\left[\sum_{t=\tau+1}^T\mathbbm{1}\{ | \hat\mu_k(t) - \mu_k | >\frac{\eta}{2}\}\right]\le \frac{4}{\alpha\eta^2},\quad \forall k=1,2.
\end{equation}
Using (\ref{eq:rf-1})-(\ref{eq:rf-2})-(\ref{eq:rf-3}), we conclude that 
$$
\EE_{\bmu}\left[ \omega_2(T)\right]\ge \alpha-\frac{1}{T}+(\frac{1}{2}-\varepsilon)\frac{(T-\tau-4/\alpha\eta^2)}{T}\ge \alpha-\frac{1}{T}+(\frac{1}{2}-\varepsilon)\frac{(T-\alpha T+1-4/\alpha\eta^2)}{T},
$$
and (\ref{eq:rf-4}) follows.  
\end{proof}

\medskip
\begin{lemma}\label{lem:DF}
   If $f:[0,1]^2\mapsto (0,1)$ be a continuous function satisfying that $f(a,a)=1/2,\,\forall a\in [0,1]$, then Algorithm~\ref{alg:Df} is stable.
\end{lemma}
\begin{proof}
Thanks to Lemma~\ref{lem:generic_stable} in Appendix~\ref{app:generic_tech}, it suffices to show (\ref{eq:generic_stable_low}) and (\ref{eq:generic_stable_up}). In the following, we prove (\ref{eq:generic_stable_up}), and (\ref{eq:generic_stable_low}) hold in a similar manner. To this aim, we fix $a\in (0,1)$ and $\varepsilon>0$. As $f$ is continuous at $(a,a)$ and $f(a,a)=1/2$, there exists $\eta>0$ such that $\left|f(x,y)-1/2\right|<\varepsilon$ if $\left\|(x,y)-(a,a)\right\|_\infty<\eta$. (\ref{eq:generic_stable_up}) follows as long as we show 
\begin{equation}\label{eq:df-1}
\uplim_{T\to\infty}\EE_{\bmu}\left[ \omega_2(T)\right]\le \frac{1}{2}+\varepsilon,\,    \forall \bmu \in \Lambda\text{ such that }\left\|\bmu-(a,a)\right\|_{\infty}<\frac{\eta}{2}.
\end{equation}
Let $\bmu\in \Lambda$ such that $\left\|\bmu-(a,a)\right\|_{\infty}<\frac{\eta}{2}$ and $T\in \NN$ such that $\alpha T>1$. From the algorithm design, we deduce that
\begin{align}\label{eq:df-2}
\nonumber    \EE_{\bmu}[\omega_2(T)]&\le \alpha+\frac{1}{T}+\frac{1}{T}\EE_{\bmu}[\sum_{t=\tau+1}^T \mathbbm{1}\{\omega_2(t)\le f(\hat{\bmu}(t))\}]\\
     &\le \alpha+\frac{1}{T}+\frac{1}{T}\left(\EE_{\bmu}[\sum_{t=\tau+1}^T \mathbbm{1}\{\omega_2(t)\le f(\hat{\bmu}(t)),\left\|\hat{\bmu}(t)-\bmu\right\|_\infty\le\frac{\eta}{2}\}]+\EE_{\bmu}[\sum_{t=\tau+1}^T\mathbbm{1}\{\left\|\hat{\bmu}(t)-\bmu\right\|_\infty>\frac{\eta}{2}\}]\right).
\end{align}
As for each $t> \tau$ and $k\in \{1,2\}$, $N_k(t)\ge \alpha T\ge \alpha (t-\tau)$, an application of Lemma~\ref{lem:concentration} in Appendix~\ref{app:generic_tech} with $H=\{t> \tau\},\,\zeta=\alpha$ and $\delta=\eta/2$ yields that
\begin{equation}\label{eq:df-3}
    \EE_{\bmu}\left[\sum_{t=\tau+1}^T\mathbbm{1}\{ | \hat\mu_k(t) - \mu_k | >\frac{\eta}{2}\}\right]\le \frac{4}{\alpha\eta^2},\quad \forall k=1,2.
\end{equation}
Next we observe that $\left\|\hat{\bmu}(t)-\bmu\right\|_\infty\le\frac{\eta}{2}$ implies that $\left\|\hat{\bmu}(t)-(a,a)\right\|_\infty\le\eta$, and $f(\hat{\bmu}(t))<1/2+\varepsilon$ thanks to the triangle inequality. Thus, the third term in (\ref{eq:df-2}) is bound as:
\begin{align}\label{eq:df-4}
 \nonumber    \EE_{\bmu}[\sum_{t=\tau+1}^T \mathbbm{1}\{\omega_2(t)\le f(\hat{\bmu}(t)),\left\|\hat{\bmu}(t)-\bmu\right\|_\infty\le\frac{\eta}{2}\}]&\le \EE_{\bmu}[\sum_{t=\tau+1}^T \mathbbm{1}\{\omega_2(t)\le \frac{1}{2}+\varepsilon\}]\\
 \nonumber   &=\EE_{\bmu}[\sum_{t=\tau+1}^T \mathbbm{1}\{N_2(t)\le t(\frac{1}{2}+\varepsilon)\}] \\
 \nonumber    &\le \EE_{\bmu}[\sum_{t=\tau+1}^T \mathbbm{1}\{N_2(t)\le T(\frac{1}{2}+\varepsilon)\}] \\
 \nonumber    &\le T(\frac{1}{2}+\varepsilon)-\lfloor\alpha T\rfloor\\
     &\le T(\frac{1}{2}+\varepsilon)-\alpha T+1.
\end{align}
By (\ref{eq:df-3})-(\ref{eq:df-4}), we derive (\ref{eq:df-2}) is bounded by 
$$
\alpha+\frac{1}{T}+\frac{1}{T}\left( T(\frac{1}{2}+\varepsilon)-\alpha T+1 +\frac{4}{\alpha\eta^2}  \right).
$$
By taking the limit superior on the above upper bound, we get (\ref{eq:df-1}).
\end{proof}

\subsection{Technical Lemmas}\label{app:generic_tech}

\begin{lemma}\label{lem:generic_stable}
    Suppose that an algorithm satisfies 
    \begin{equation}\label{eq:generic_stable_low}
            \lim_{(\mu_1,\mu_2)\to (a,a)}\lowlim_{T\to\infty} \EE_{\bmu}[\omega_2(T)]=\frac{1}{2},\quad \forall a\in (0,1),
    \end{equation}
    and
     \begin{equation}\label{eq:generic_stable_up}
            \lim_{(\mu_1,\mu_2)\to (a,a)}\uplim_{T\to\infty} \EE_{\bmu}[\omega_2(T)]=\frac{1}{2},\quad \forall a\in (0,1).
    \end{equation}
    Then it is a stable algorithm.
\end{lemma}%
\begin{proof}
    Let $a\in (0,1)$, we show (A) in Definition~\ref{def:stable} holds, and (B) follows similarly. Consider a sequence $\{\bl^{(n)}\}_{n\in\NN}$ defined as: $\lambda^{(n)}_1=a+\frac{1-a}{2n}$ and $\lambda^{(n)}_2=a-\frac{a}{2n}$ for all $n\in \NN$. The assumption (\ref{eq:generic_stable_low}) implies that 
    $$
\lim_{n\to\infty}\lowlim_{T\to\infty} \EE_{\bl^{(n)}}[\omega_2(T)]=\frac{1}{2}.
    $$
    On the other hand, the assumption (\ref{eq:generic_stable_up}) implies that 
 $$
   \lim_{n\to\infty}\uplim_{T\to\infty} \EE_{\bl^{(n)}}[\omega_2(T)]=\frac{1}{2}.
    $$
    Thus, (A) is satisfied with the above sequence $\{\bl^{(n)}\}_{n\in\NN}$.
\end{proof}

\begin{lemma}[\cite{combes2014unimodal}]\label{lem:concentration}
    Let $\zeta> 0$ and $H \subset \mathbb{N}$ be a (random) set of rounds such that $\{t\in H\}$ is ${\cal F}_{t-1}$-measurable for all $t\geq 1$. Furthermore, we assume for each $t\in H$, we have $N_k(t)\ge \zeta \sum_{s=1}^{t} \mathbbm{1}_{\{s\in H\}}$. Then for all $\delta>0$,
\begin{equation*}\label{eq:ineq1}
\mathbb{E}_{\bmu}\left[ \sum_{t \geq 1} \mathbbm{1}{\{ t \in H , | \hat\mu_k(t) - \mu_k | > \delta \} }\right]  \leq  \frac{1}{\zeta\delta^2}.
\end{equation*}
\end{lemma}

\subsection{Numerical Experiments}\label{app:num}

We illustrate the performance of the ETT algorithm with $\alpha=1/4$ and different thresholds $\varepsilon$, and compare it to that of the uniform sampling algorithm and to that of an Oracle algorithm that selects arms using optimal exploration rate $x^\star(\bmu)=\argmax_x g(x,\bmu)$. 
We consider the instance: $\bmu = (0.0005, 0.0001)$. For this instance, the optimal budget allocation is approximately $x^*(\bmu) \approx0.43434$.

We first examine how the algorithms behave when the sampling budget varies. Figure~\ref{fig:exp_vary_budget} illustrates the estimated error probabilities as the budget changes from $T=6000$ to $T=40000$. The error probabilities are derived from $40000$ trials for each setting and algorithm. In all budget scenarios, the Oracle algorithm outperforms the others, while ETT performs comparably or worse than the uniform sampling algorithm. This observation aligns with our Theorem~\ref{thm:stable}.

We then investigate the sensitivity of ETT to the input value $\varepsilon$. Figure~\ref{fig:exp_vary_input} displays the error probability with a fixed budget of $T=20000$ and varying $\varepsilon$ from $0$ to $0.0008$. The error probability is again determined from $40000$ trials for each setting and algorithm. Regardless of $\varepsilon$, the performance of ETT  is similar to or worse than that of the uniform sampling algorithm, further supporting our Theorem~\ref{thm:stable}.
Given that $\mu_1-\mu_2 = 0.0004$, the relatively low performance of ETT with $\varepsilon<0.0004$ compared to that of the uniform sampling algorithm suggests that relying less on the estimated optimal allocation could yield better results.

\begin{figure}[ht]
\centering
\includegraphics[width=0.7\textwidth]{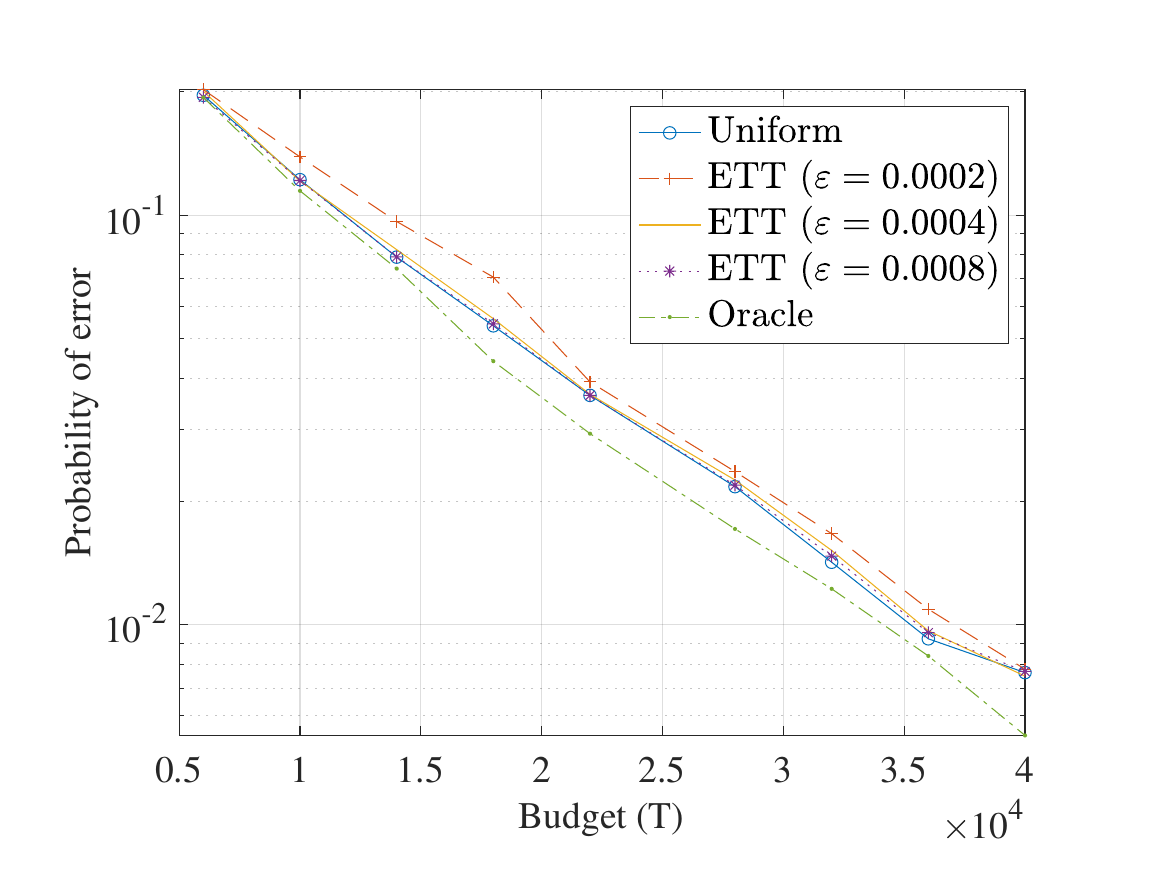}
\caption{Error probability comparison across algorithms for varying sample budgets ($T=6000$ to $T=40000$). Derived from $40000$ trials for each setting and algorithm.}
\label{fig:exp_vary_budget}
\end{figure}

\begin{figure}[ht]
\centering
\includegraphics[width=0.7\textwidth]{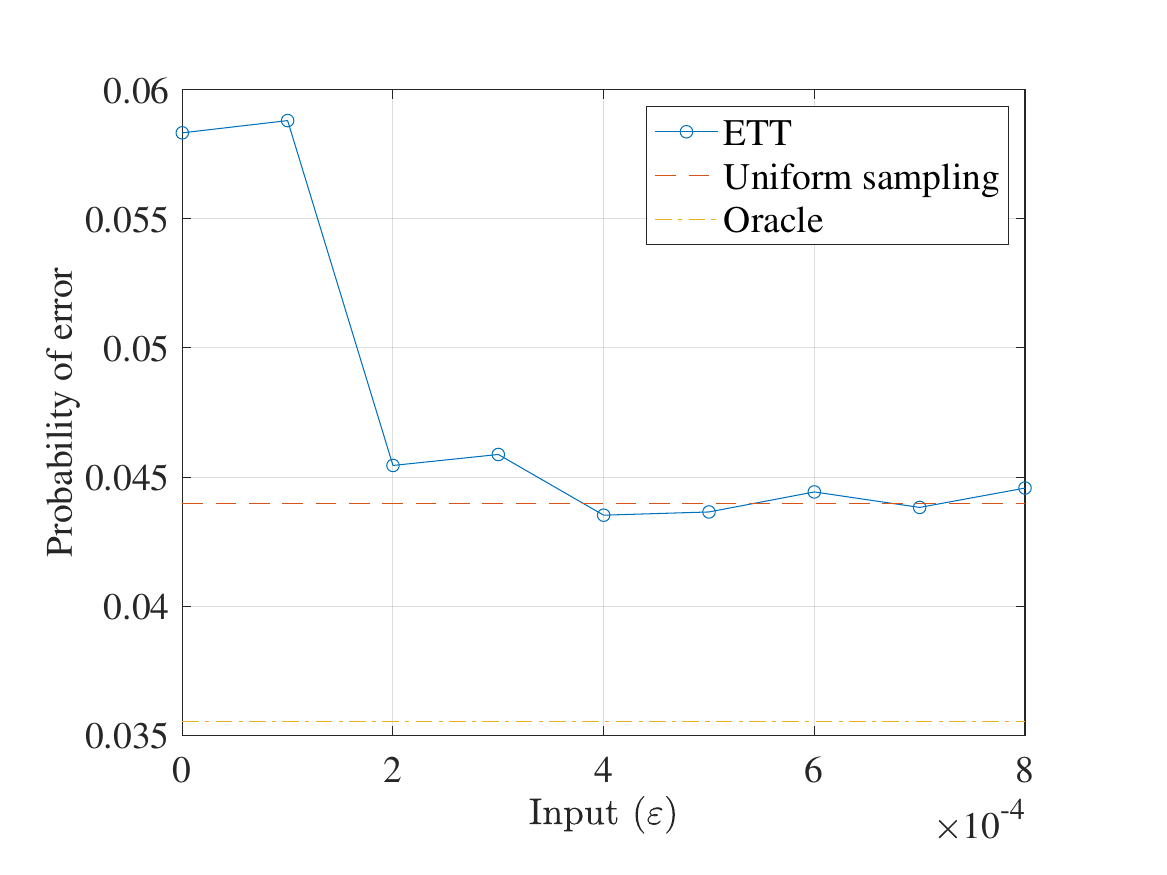}
\caption{Error probability comparison for varying ETT threshold inputs ($\varepsilon=0$ to $\varepsilon=0.0008$). Derived from $40000$ trials for each setting and algorithm.}
\label{fig:exp_vary_input}
\end{figure}

\end{document}